\documentclass[sigconf,screen]{acmart}
\AtBeginDocument{%
  }

\usepackage{mathtools}
\usepackage{amsfonts}
\usepackage{booktabs}
\usepackage{multirow}
\usepackage{subcaption}
\usepackage{enumitem}
\usepackage{siunitx}
\usepackage{cleveref}
\usepackage{algorithm}
\usepackage{algorithmic}

\newcommand{\unimixer}{RankElastor }
\newcommand{\rankmixer}{RankMixer }

\definecolor{mygray}{RGB}{100,100,100}
\definecolor{mypurple}{RGB}{179,108,161}

\copyrightyear{2026}
\acmYear{2026}
\setcopyright{cc}
\setcctype{by}
\acmConference[KDD '26]{Proceedings of the 32nd ACM SIGKDD Conference on Knowledge Discovery and Data Mining V.2}{August 09--13, 2026}{Jeju Island, Republic of Korea}
\acmBooktitle{Proceedings of the 32nd ACM SIGKDD Conference on Knowledge Discovery and Data Mining V.2 (KDD '26), August 09--13, 2026, Jeju Island, Republic of Korea}
\acmDOI{10.1145/3770855.3818049}
\acmISBN{979-8-4007-2259-2/2026/08}

\begin{document}

\title{Expand More, Shrink Less: Shaping Effective-Rank Dynamics for Dense Scaling in Recommendation}

\author{Guoming Li}
\authornote{Equal contribution}
\orcid{0000-0002-3450-9150}
\affiliation{%
  \institution{The Hong Kong University of Science and Technology (Guangzhou)}
  \city{Guangzhou}
  \country{China}
}
\email{paskardli@outlook.com}

\author{Shangyu Zhang}
\authornotemark[1]
\orcid{0000-0002-8614-6371}
\affiliation{%
  \institution{Tencent Inc.}
  \city{Shenzhen}
  \country{China}
}
\email{vitosyzhang@tencent.com}

\author{Junwei Pan}
\orcid{0009-0003-2697-7012}
\affiliation{%
  \institution{Tencent Inc.}
  \city{Shenzhen}
  \country{China}
}
\email{jonaspan@tencent.com}

\author{Wentao Ning}
\orcid{0000-0002-7571-6957}
\affiliation{%
  \institution{Tencent Inc.}
  \city{Shenzhen}
  \country{China}
}
\email{wentaoning@tencent.com}

\author{Jin Chen}
\orcid{0000-0001-6440-2242}
\affiliation{%
  \institution{Tencent Inc.}
  \city{Shenzhen}
  \country{China}
}
\email{elinjinchen@tencent.com}

\author{Gengsheng Xue}
\orcid{0009-0005-0237-8124}
\affiliation{%
  \institution{Tencent Inc.}
  \city{Shenzhen}
  \country{China}
}
\email{eddiegsxue@tencent.com}

\author{Chao Zhou}
\orcid{0009-0006-5953-4172}
\affiliation{%
  \institution{Tencent Inc.}
  \city{Shenzhen}
  \country{China}
}
\email{derekczhou@tencent.com}

\author{Shudong Huang}
\orcid{0009-0007-6215-272X}
\affiliation{%
  \institution{Tencent Inc.}
  \city{Shenzhen}
  \country{China}
}
\email{ericdhuang@tencent.com}

\author{Haijie Gu}
\orcid{0009-0001-4424-2322}
\affiliation{%
  \institution{Tencent Inc.}
  \city{Shenzhen}
  \country{China}
}
\email{jerrickgu@tencent.com}

\author{Menglin Yang}
\authornote{Corresponding author}
\orcid{0000-0003-2510-5282}
\affiliation{%
  \institution{The Hong Kong University of Science and Technology (Guangzhou)}
  \city{Guangzhou}
  \country{China}
}
\email{menglinyang@hkust-gz.edu.cn}

\renewcommand{\shortauthors}{Guoming Li et al.}

\begin{abstract}
Scaling recommendation models is a central challenge in recommender systems. 
Recently, \textit{\rankmixer} has emerged as an effective solution, operating on a unified token representation and alternating between token mixing and per-token feedforward networks (P-FFNs) to achieve scalable performance. 
However, \rankmixer suffers from \textit{embedding collapse}, where learned representations have low effective rank, limiting expressivity and underutilizing the expanded representation space. 
Through empirical analysis and theoretical insights, we identify rigid token mixing and P-FFN modules as the primary causes of this phenomenon, jointly inducing a \textbf{damped oscillatory trajectory} in effective-rank evolution across layers. 
To address it, we propose \textbf{\unimixer}, a novel architecture that produces spectrum-robust representations with provable collapse mitigation. 
\unimixer introduces two components: (i) \textbf{parameterized full mixing}, which enables expressive token mixing with improved spectral robustness; and (ii) \textbf{GLU-improved P-FFNs}, which stabilize representation spectra through GLU-style FFN modules. 
Extensive experiments on large-scale industrial datasets demonstrate that \unimixer consistently improves recommendation performance, mitigates embedding collapse, and exhibits robust scaling behavior. 
Code is available at this GitHub repository:~\url{https://github.com/vasile-paskardlgm/RankElastor}
\end{abstract}

\begin{CCSXML}
<ccs2012>
 <concept>
  <concept_id>00000000.0000000.0000000</concept_id>
  <concept_desc>Do Not Use This Code, Generate the Correct Terms for Your Paper</concept_desc>
  <concept_significance>500</concept_significance>
 </concept>
\end{CCSXML}

\ccsdesc[500]{Computing methodologies~Machine learning}

\keywords{Recommender Systems, CTR prediction, Dimensional Collapse}


\maketitle
\newcommand\kddavailabilityurl{https://doi.org/10.5281/zenodo.20252036}
\ifdefempty{\kddavailabilityurl}{}{
\begingroup\small\noindent\raggedright\textbf{Resource Availability:}\\
The source code of this paper has been made publicly available at \url{\kddavailabilityurl}.
\endgroup
}

\section{Introduction}
\label{sec:intro}

Recommender systems are a core application of machine learning, aiming to predict user–item interactions from massive multi-field categorical data~\citep{multi-field-categorical-data}. 
They have become indispensable in modern digital platforms, powering applications such as e-commerce, social media, and content recommendation. 
Recent advances in deep learning–based recommenders enable flexible feature representation learning and complex feature interaction modeling, leading to strong performance in large-scale industrial deployments.

Inspired by the success of large foundation models~\citep{foundation-model-clip,foundation-model-gpt4,foundation-model-latentdiffusion,foundation-model-sam}, scaling model capacity has emerged as a natural direction for recommender systems~\citep{scaling-law-rec-1-wukong,scaling-law-rec-2,scaling-law-rec-3,scaling-law-rec-4}. 
Among recent scaling-oriented architectures, \textbf{\rankmixer}~\cite{rankmixer} has attracted significant attention. 
The \rankmixer architecture can be understood as three core components:
(\textbf{\romannumeral1}) a \textit{tokenization} module that maps heterogeneous embeddings from multiple fields into a unified token representation space;
(\textbf{\romannumeral2}) a \textit{token mixing} module that augments token representations by combining the original token matrix with its block-transposed view; and
(\textbf{\romannumeral3}) a \textit{per-token FFN} (P-FFN) module that models feature interactions using independent feedforward networks applied to each token~\cite{feature-interaction-1,feature-interaction-2}. 
By iteratively alternating the token mixing and P-FFN modules, \rankmixer forms a deep and scalable recommender architecture that achieves \textbf{state-of-the-art} performance on multiple benchmarks as well as practical deployments.

Despite its empirical success, \rankmixer has not been examined from the perspective of \textit{embedding collapse}~\cite{embedding-collapse-1-multiembedding}. 
Embedding collapse has recently been identified as a fundamental obstacle in scaling recommender systems~\citep{embedding-collapse-1-multiembedding,embedding-collapse-2,embedding-collapse-3,embedding-collapse-4-ulrec,feagen}, where learned representations concentrate in low-rank subspaces, limiting representation diversity and reducing the benefits of model scaling. 
While prior work has attempted to mitigate collapse via architectural modifications~\cite{embedding-collapse-1-multiembedding} or optimization strategies~\cite{CovLoss,embedding-collapse-useME-3}, these solutions are typically designed for specific recommenders. 
A systematic understanding of embedding collapse in \rankmixer remains missing.

In this paper, we investigate \rankmixer through the lens of \textit{effective rank}~\cite{effective-rank,norm-erank-1}, a spectral measure that captures representation capacity beyond algebraic rank and generalizes prior collapse analysis tools in recommenders~\cite{embedding-collapse-1-multiembedding,feagen}. 
Empirically, we observe that \rankmixer exhibits a distinctive \textbf{damped oscillatory trajectory} in rank evolution across layers: token-mixing modules slightly increase effective rank, while P-FFN modules contract it. 
Although this alternating behavior yields modest improvements over conventional recommenders (e.g., DCNv2~\cite{dcnv2+crossnet}), which typically show monotonic rank decay, the gains remain limited and do not reliably prevent representation collapse as model depth increases. 
Complementing these observations, our theoretical analysis shows that the original token-mixing and P-FFN modules in \rankmixer collectively constrain spectral robustness, explaining why collapse mitigation remains incomplete.

To tackle this drawback and unlock the potential of the token-transformation-based recommenders, we propose \unimixer, a novel architecture designed to produce spectrum-robust representations with provable collapse mitigation. 
\unimixer introduces two key components: (\textbf{\romannumeral1}) \textit{parameterized full mixing}, which performs learnable fine-grained mixing over tokens to improve spectral expressiveness; and (\textbf{\romannumeral2}) \textit{GLU-improved P-FFNs}, which employ gated activations~\cite{GLU-activation} to prevent collapse amplification induced by the original \rankmixer P-FFNs. 
Together, these modules produce more stable representation spectra and improved scaling behavior.

We evaluate \unimixer against strong recommender baselines, including \rankmixer, on industrial-scale benchmarks Criteo~\cite{criteo} and Avazu~\cite{avazu}, enabling a practically aligned comparison. 
Experimental results show that \unimixer consistently improves downstream CTR prediction performance, achieving over \textbf{0.001 AUC gain} against the strongest baseline — a superior improvement according to~\cite{DIN}. 
Beyond accuracy improvements, \unimixer produces representations with markedly higher effective rank, indicating stronger mitigation of representation collapse. 
In addition, \unimixer exhibits markedly better parameter scaling behavior than \rankmixer, confirming its advantage as a scalable recommender architecture.
Our contributions are summarized below:

\begin{itemize}[leftmargin=15pt,parsep=2pt,itemsep=2pt,topsep=2pt]
    \item We analyze \rankmixer from the perspective of embedding collapse, providing both empirical evidence and theoretical justification showing that collapse remains insufficiently addressed in the architecture.
    \item We propose \unimixer, a novel deep recommender architecture that mitigates representation collapse through parameterized full mixing and GLU-improved P-FFNs, with theoretical guarantees on spectral robustness.
    \item We conduct extensive experiments on industrial-scale benchmarks demonstrating that \unimixer improves recommendation performance, representation diversity, and parameter scaling behavior, providing a new direction for scaling recommender architectures.
\end{itemize}
\section{Research Background}
\label{sec:background}

\subsection{Notations and Preliminaries}\label{sec:background:prelim}

Recommendation models aim to predict user actions based on features drawn from multiple fields~\cite{multi-field-categorical-data}. 
In line with the application scenario considered in this paper, namely \textit{CTR/ranking prediction}~\cite{ctr-prediction}, we consider \(n\) fields, where the \(i\)-th field is denoted as \(\mathcal{X}_i\), and define the joint feature space as \(\mathcal{X}=\mathcal{X}_1\times\mathcal{X}_2\times\dots\times\mathcal{X}_n\). 
Let \(\mathcal{Y}\) denote the prediction space; the goal of a recommendation model is to learn a mapping from \(\mathcal{X}\) to \(\mathcal{Y}\).

We focus on recommenders following the ``embedding-interaction'' architecture. 
Such models first transform an input sample \(X \in \mathcal{X}\) into a \textit{\textbf{feature embedding}} \(E \in \mathbb{R}^{n \times k}\), where \(k\) is the embedding dimension and the \(i\)-th row \(E_i\) denotes the embedding vector for field \(i\). 
The embedding matrix is then processed by a \textit{\textbf{feature interaction}} module~\cite{feature-interaction-1,feature-interaction-2}, which models correlations across fields and produces informative representations for prediction. 
This architecture is widely adopted in real-world recommender systems due to its strong empirical performance~\cite{dcnv2+crossnet,xdeepfm+cin,autoint,fm}.

\subsubsection*{\bf Embedding Collapse~\cite{embedding-collapse-1-multiembedding}.} In general machine learning settings, dimensional collapse refers to models degenerating into trivial representations that map inputs to nearly constant outputs~\citep{dimension-collapse}, which can be characterized through spectral analysis of learned representations~\citep{dimension-collapse-svd}. 
In recommender systems, a related phenomenon known as \textit{\textbf{embedding collapse}} has recently been identified: the embedding matrix becomes approximately low-rank with multiple near-zero singular values~\citep{embedding-collapse-1-multiembedding,embedding-collapse-2,embedding-collapse-3,embedding-collapse-4-ulrec}. 

This phenomenon is commonly measured using \textit{effective rank}~\cite{effective-rank}, which characterizes the spectral distribution of a matrix beyond algebraic rank. 
Among several formulations, we adopt a widely used norm-based definition~\cite{norm-erank-1,norm-erank-3,norm-erank-4}, also known as the stable rank~\cite{norm-erank-2}:
\begin{align}\label{eq:norm-erank}
\operatorname{erank}(X) = \frac{\sum_{i}\sigma^2_i}{\max_i\sigma^2_i} 
= \frac{\|X\|_F^2}{\|X\|_2^2}\ ,
\end{align}
where \(\sigma_i\) denotes the singular values of \(X\).

\subsubsection*{\bf \rankmixer~\cite{rankmixer}.} As a recent deep recommender architecture, \rankmixer has attracted substantial attention due to its streamlined design, strong empirical performance, and favorable scaling behavior. 
Its architecture consists of three modules: (\textbf{\romannumeral1}) \textit{tokenization}; (\textbf{\romannumeral2}) \textit{token mixing}; and (\textbf{\romannumeral3}) \textit{per-token FFNs} (P-FFNs). We briefly summarize them below:

\begin{itemize}[leftmargin=15pt,parsep=2pt,itemsep=2pt,topsep=2pt]
    \item {\it \underline{Tokenization}.} The tokenization module takes the embedding matrix \(E\) and groups embedding vectors into \(T\) clusters, followed by a unified projection (e.g., linear mapping or MLP) into a shared \(D\)-dimensional space, producing a token matrix \(X^{(0)}\in\mathbb{R}^{T\times D}\). 
    Each row \(X^{(0)}_t\in\mathbb{R}^D\) is called a token embedding. 
    This module is primarily used for aligning embeddings from heterogeneous sources. 
    In this paper, we focus on the subsequent core modules.
    
    \item \textbf{\textit{\underline{Token mixing}.}} The token mixing module partitions the column dimension \(D\) into \(H\) equal segments of size \(d = D/H\), forming a \(T \times T\) grid of blocks, each in \(\mathbb{R}^{1 \times d}\). 
    At the \(l\)-th \rankmixer block, a block-transpose operation is first applied to the input \(X^{(l)}\), exchanging block pairs \(X^{(l)}_{i,j}\) and \(X^{(l)}_{j,i}\). 
    The transformed representation is then combined with the original token matrix through a residual connection, followed by output layer normalization, producing enhanced token representations with cross-token information mixing.
    
    \item \textbf{\textit{\underline{P-FFNs}.}} \rankmixer then sends the mixed tokens into the token-specific FFN modeuls, which is called P-FFN. 
    This module serves as the function for performing feature interaction as those in typical ``embedding-interaction'' type recommenders. 
    In \rankmixer, the P-FFN is configured as a two layer with GELU~\cite{GELU-activation} as activation function.
\end{itemize}

Stacking token mixing and P-FFN modules yields a deep and scalable recommender architecture. 
An overview of the \rankmixer architecture is shown in Figure~\ref{fig:arch}(\textbf{a}).

\begin{figure*}[t]
  \centering
    \begin{subfigure}{0.245\textwidth}
        \centering
        \includegraphics[width=\linewidth]{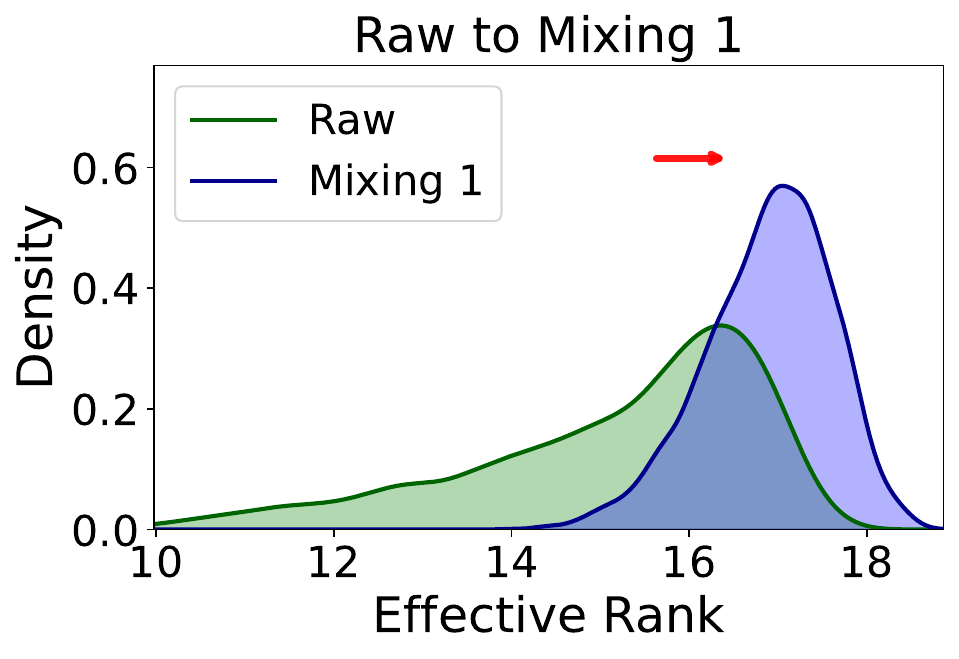}
        \label{fig:erank-criteo-base-1-rankmixer}
    \end{subfigure}
    \begin{subfigure}{0.245\textwidth}
        \centering
        \includegraphics[width=\linewidth]{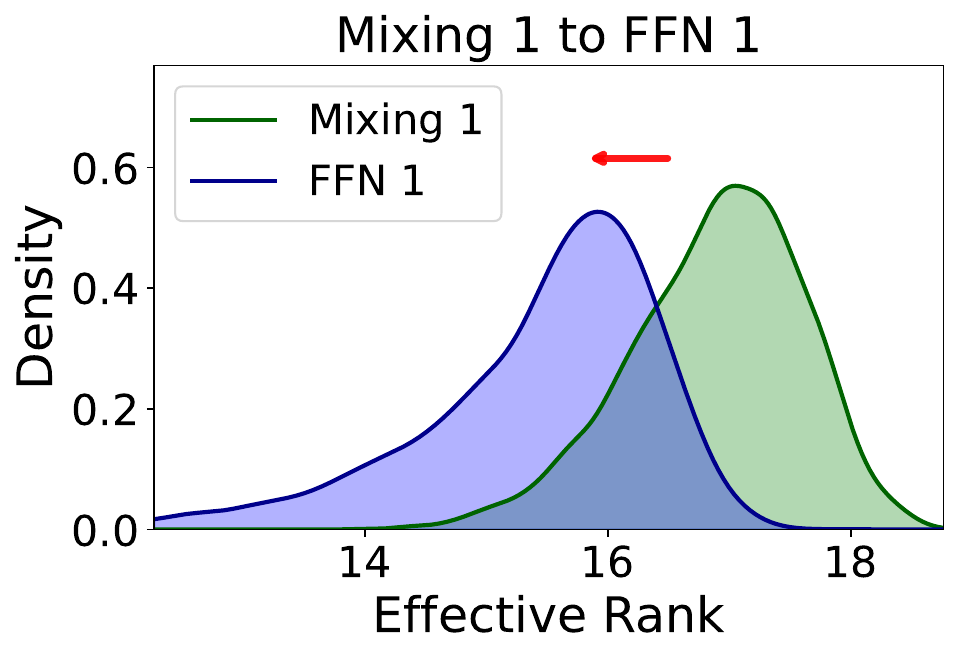}
        \label{fig:erank-criteo-base-2-rankmixer}
    \end{subfigure}
    \begin{subfigure}{0.245\textwidth}
        \centering
        \includegraphics[width=\linewidth]{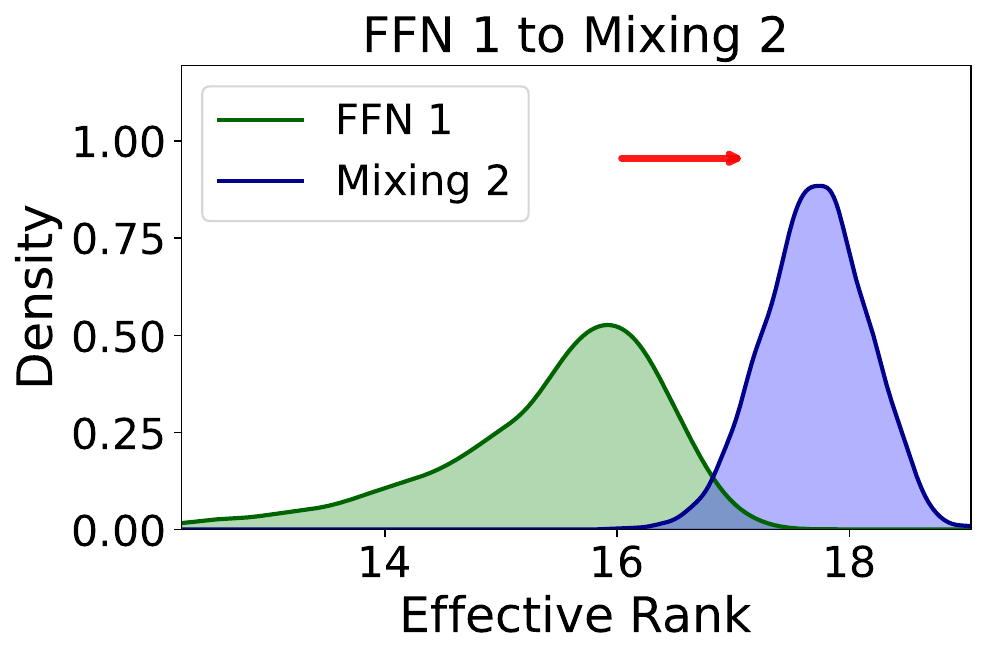}
        \label{fig:erank-criteo-base-3-rankmixer}
    \end{subfigure}
    \begin{subfigure}{0.245\textwidth}
        \centering
        \includegraphics[width=\linewidth]{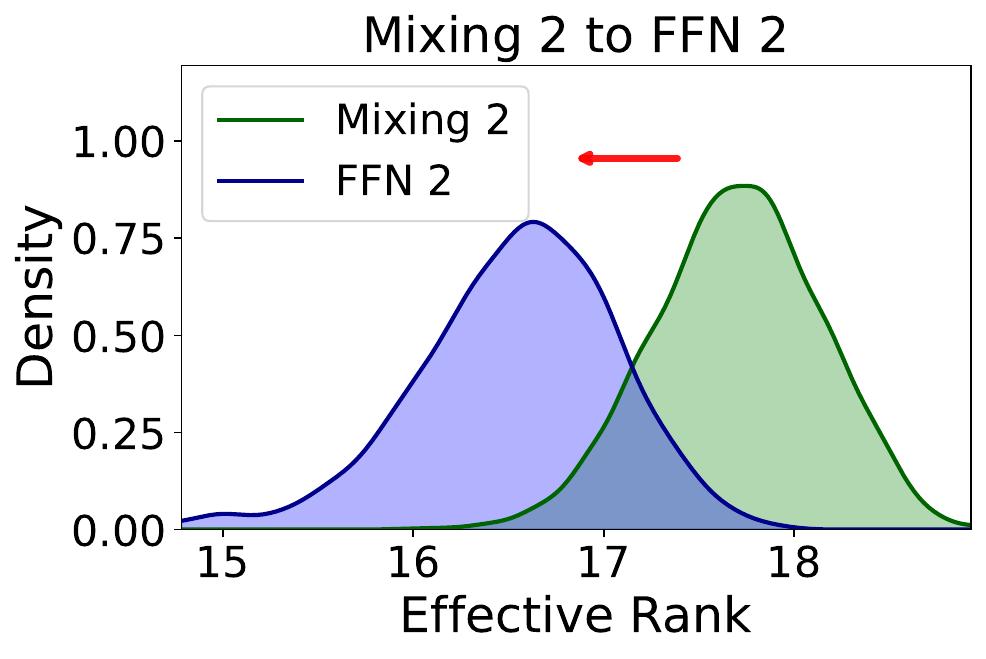}
        \label{fig:erank-criteo-base-4-rankmixer}
    \end{subfigure}
    \\
    \begin{subfigure}{0.245\textwidth}
        \centering
        \includegraphics[width=\linewidth]{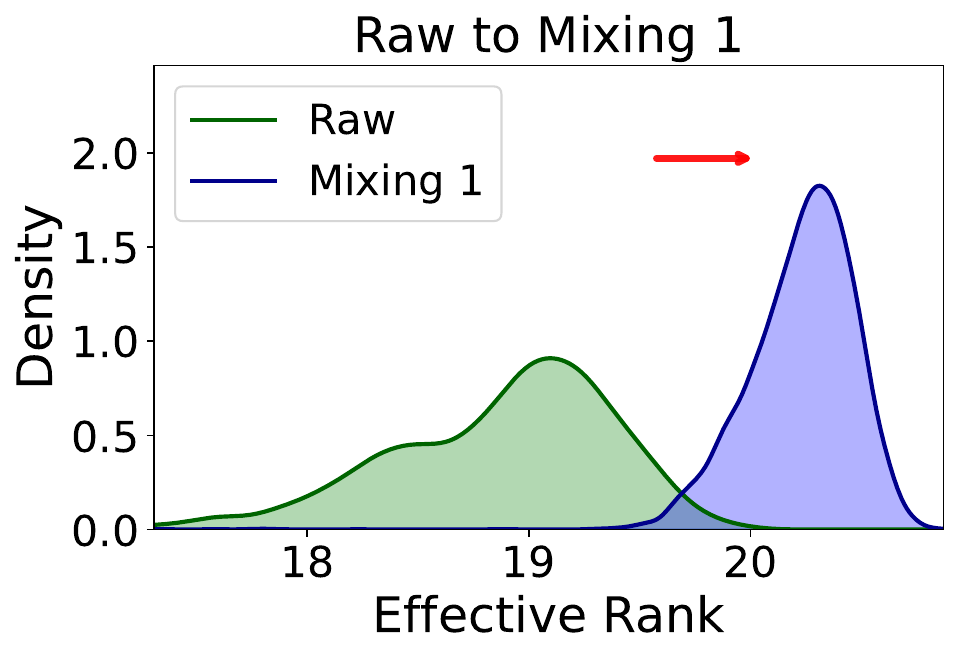}
        \caption{Raw Embedding $\to$ Mixing 1}
        \label{fig:erank-avazu-base-1-rankmixer}
    \end{subfigure}
    \begin{subfigure}{0.245\textwidth}
        \centering
        \includegraphics[width=\linewidth]{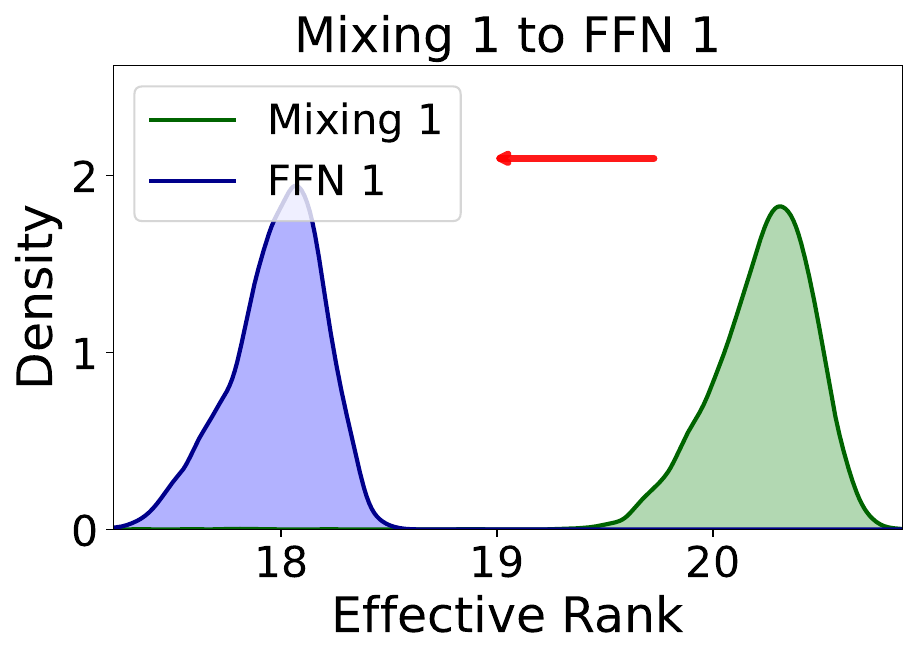}
        \caption{Mixing 1 $\to$ FFN 1}
        \label{fig:erank-avazu-base-2-rankmixer}
    \end{subfigure}
    \begin{subfigure}{0.245\textwidth}
        \centering
        \includegraphics[width=\linewidth]{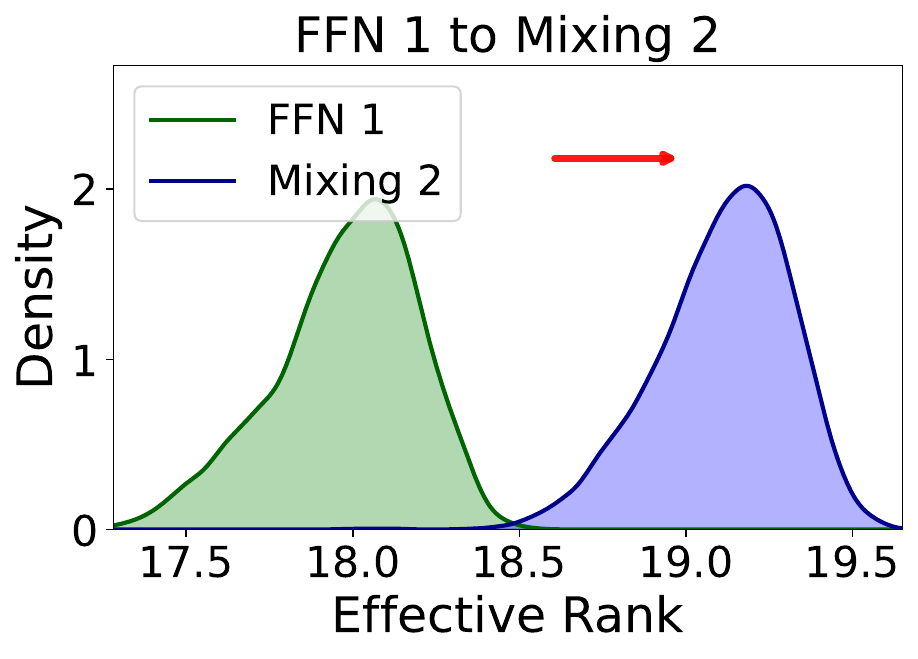}
        \caption{FFN 1 $\to$ Mixing 2}
        \label{fig:erank-avazu-base-3-rankmixer}
    \end{subfigure}
    \begin{subfigure}{0.245\textwidth}
        \centering
        \includegraphics[width=\linewidth]{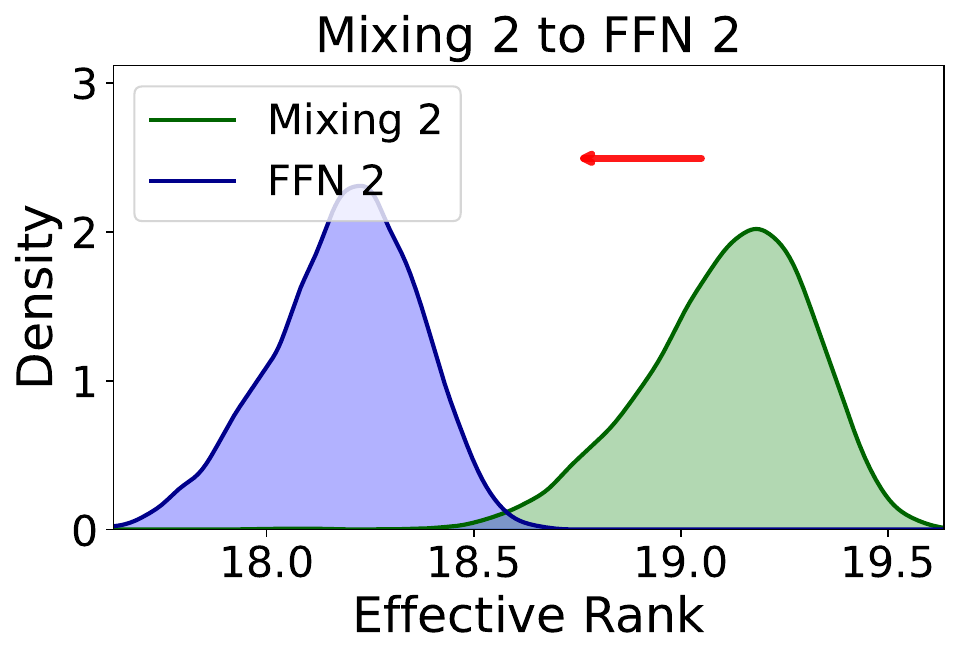}
        \caption{Mixing 2 $\to$ FFN 2}
        \label{fig:erank-avazu-base-4-rankmixer}
    \end{subfigure}
  \caption{Distributional shift of effective rank across representation stages in \rankmixer. Distributions of per-sample effective rank from raw embeddings to successive module outputs on Criteo (top) and Avazu (bottom).}
  \label{fig:erank-rankmixer}
\end{figure*}

\begin{figure}[t]
  \centering
    \begin{subfigure}{0.495\linewidth}
        \centering
        \includegraphics[width=1.02\linewidth]{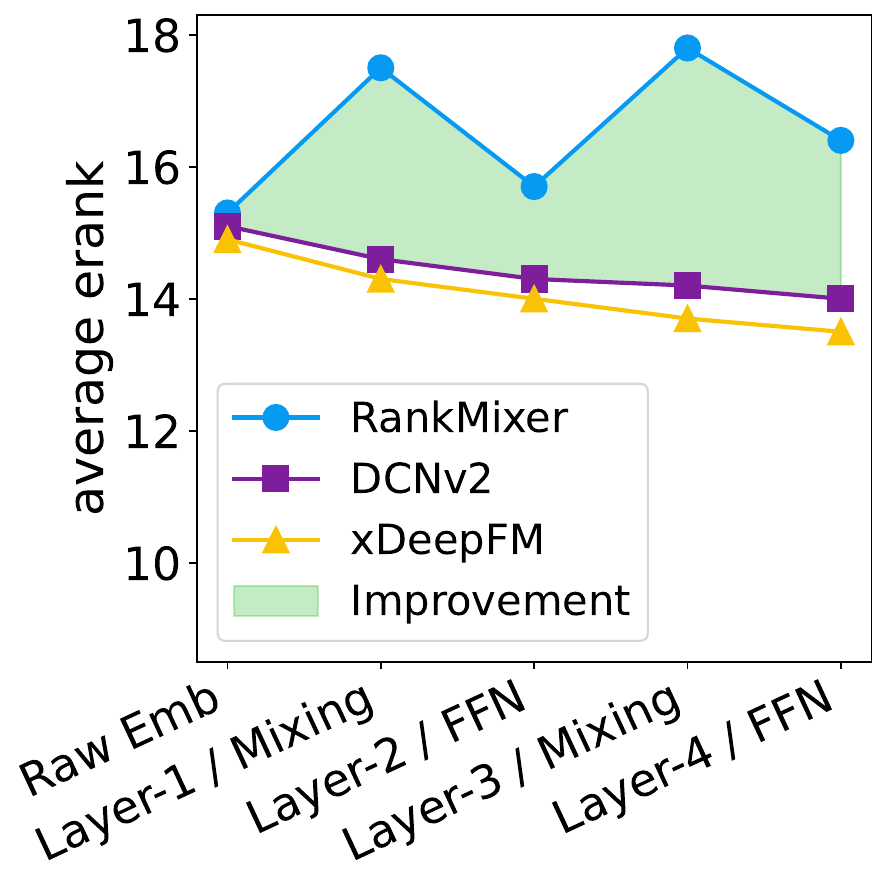}
        \caption{Comparison on Criteo.}
        \label{fig:avg-erank-criteo-rankmixer}
    \end{subfigure}
    \begin{subfigure}{0.495\linewidth}
        \centering
        \includegraphics[width=1.02\linewidth]{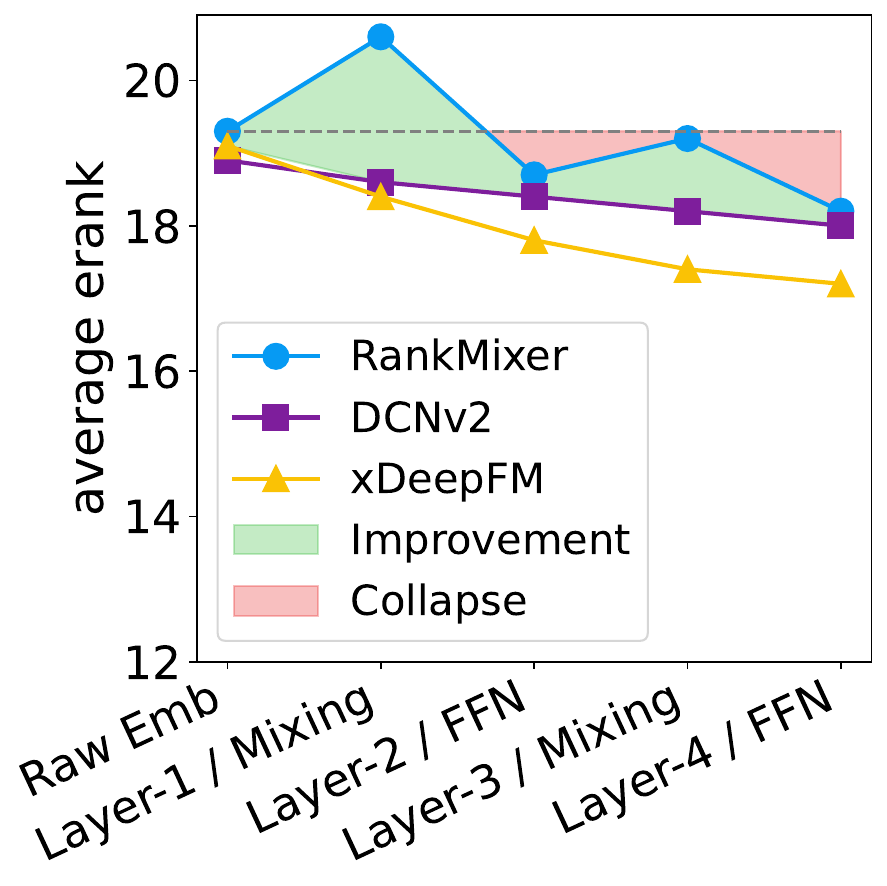}
        \caption{Comparison on Avazu.}
        \label{fig:avg-erank-avazu-rankmixer}
    \end{subfigure}
    %
  \caption{Comparison of average effective rank across representation stages for \rankmixer (damped oscillatory trajectory) and baselines (monotonic decay).}
  \label{fig:avg-erank-rankmixer}
\end{figure}

\subsection{On Embedding Collapse in \rankmixer}\label{sec:background:collapse-rankmixer}

Although \rankmixer demonstrates strong scalability and effectiveness, its behavior under embedding collapse has not been systematically studied. 
Existing collapse analyses primarily focus on conventional recommenders~\cite{embedding-collapse-1-multiembedding,feagen}, such as DCNv2~\cite{dcnv2+crossnet} and xDeepFM~\cite{xdeepfm+cin}. 
In this section, we analyze the effective-rank dynamics of \rankmixer representations across its core modules (token mixing and P-FFNs), combining empirical observations with theoretical justification.

\subsubsection*{\bf Empirical Demonstrations.} We conduct CTR prediction experiments on two industrial-scale benchmarks, Criteo~\cite{criteo} and Avazu~\cite{avazu}, using the FuxiCTR framework~\cite{benchmark-fuxictr}. 
We compare \rankmixer (with two token-mixing blocks and two P-FFN blocks) against DCNv2 and xDeepFM configured with comparable depth, resulting in four-layer recommenders for fair comparison. 
We then visualize the average effective rank of module outputs across test samples. 
All remaining experimental settings follow those described in Section~\ref{sec:exps}.

While full recommendation performance results are reported in Section~\ref{sec:exps}, we focus here on effective-rank dynamics. 
As shown in Figure~\ref{fig:avg-erank-rankmixer}, conventional models exhibit a \textbf{monotonic decay} in effective rank with depth; in contrast, \rankmixer shows an alternating pattern: token mixing increases effective rank, whereas P-FFNs shrink it, producing a \textbf{damped oscillatory trajectory}. 
Although such sawtooth pattern allows \rankmixer to maintain slightly higher effective rank than conventional models, the improvement remains limited and does not eliminate collapse.

On Criteo, the effective rank at the final layer only marginally exceeds that of the raw embeddings, indicating that the rank-enhancement effect of token mixing is modest in practice. 
This limitation becomes more pronounced on Avazu (Figure~\ref{fig:avg-erank-avazu-rankmixer}), where contractions introduced by P-FFNs dominate across layers, causing collapse to re-emerge despite the presence of token mixing.

Since higher effective rank typically indicates better utilization of representation capacity and correlates with improved recommendation performance~\cite{embedding-collapse-1-multiembedding,feagen}, these results reveal both the advantage and limitation of \rankmixer from the perspective of embedding collapse: although the architecture improves representation rank compared with conventional recommenders, it does not fundamentally prevent collapse.


\subsubsection*{\bf Theoretical Justifications.} We further complement the empirical analysis with theoretical justification. 
Both the effective-rank improvement and the remaining collapse limitation of \rankmixer can be traced to its two modules: token mixing and P-FFNs. 
The following theorems provide theoretical justification for these effects.

\begin{theorem}[Effective rank under block-transpose mixing]
\label{thm:erank-bound-blocktranspose}
Let $X\in\mathbb R^{T\times D}$ be a representation matrix with \(D = Td\) and \(d \ge 1\). Partition $X$ into a \( T \times T\) grid of blocks $X_{ij} \in \mathbb{R}^{1\times d}$. Define the block-transpose operator $\mathcal{T}$ by \((\mathcal{T}(X))_{ij}=X_{ji}\), and let $Y = \mathcal{T}(X)$. Denote the algebraic rank and effective rank of $X$ by $\operatorname{rank}(X)=r$ and $\operatorname{erank}(X)=k$. Assume $X$ satisfies Frobenius-orthogonality and spectral incoherence with its block transpose:
\[
\langle X, Y \rangle_F \approx 0 \quad \mathrm{and} \quad \|X+Y\|_{2}^2\approx \max(\|X\|_{2}^2,\|Y\|_{2}^2)\ .
\]
Let $M = X+Y$ and define $\mu = \operatorname{erank}(Y)$. Then
\[
\frac{2k\mu}{(\sqrt{k}+\sqrt{\mu})^2} \le \operatorname{erank}(M) \le 2 (k + \mu)\ ,
\]
where $\mu \le \operatorname{rank}(Y) \le \min\{T, rd\}$.
\end{theorem}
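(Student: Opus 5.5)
Under the two stated approximations, $\operatorname{erank}(M)$ can be written in closed form from the four quantities $\|X\|_F,\|Y\|_F,\|X\|_2,\|Y\|_2$; both bounds then come from elementary inequalities. The Frobenius-orthogonality assumption $\langle X,Y\rangle_F\approx 0$ turns the numerator into a sum:
\[
\|M\|_F^2=\|X\|_F^2+\|Y\|_F^2+2\langle X,Y\rangle_F\approx \|X\|_F^2+\|Y\|_F^2 .
\]
Using $\|X\|_F^2=k\,a$ and $\|Y\|_F^2=\mu\, b$ with $a=\|X\|_2^2$ and $b=\|Y\|_2^2$, this becomes $ka+\mu b$. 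The spectral-incoherence assumption gives the denominator directly: $\|M\|_2^2\approx\max(a,b)$. Hence
\[
\operatorname{erank}(M)\approx\frac{ka+\mu b}{\max(a,b)} .
\]
I will treat both approximations as equalities throughout.

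\textbf{Upper bound.} Since $a,b\le\max(a,b)$, the ratio is at most $k+\mu$, which is already below the claimed $2(k+\mu)$. If I want a rigorous version that does not lean on the spectral approximation, I can instead use $\|M\|_2\ge\max(\|X\|_2,\|Y\|_2)-$slack, or more cleanly $\|X+Y\|_2^2\ge\tfrac12(\|X\|_2^2+\|Y\|_2^2)$-type estimates. The factor $2$ in the statement presumably absorbs this slack.

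\textbf{Lower bound.} By symmetry assume $a\ge b$ and set $t=b/a\in(0,1]$. Then $\operatorname{erank}(M)\approx k+\mu t$. A naive minimization over $t\to0$ gives only $k$, and $k$ is indeed at least $k\mu/(\sqrt k+\sqrt\mu)^2$. To see this, note
\[
\frac{2k\mu}{(\sqrt k+\sqrt\mu)^2}\le\min(k,\mu),
\]
because $2\mu\le(\sqrt k+\sqrt\mu)^2$ and symmetrically; the harmonic-type mean is dominated by $\min$. Since $k+\mu t\ge k\ge\min(k,\mu)$, the lower bound follows in this case. If $b>a$ the same argument gives $\operatorname{erank}(M)\ge\mu\ge\min(k,\mu)$.

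\textbf{Robust alternative.} To make the lower bound robust without relying on the max approximation, I would use the triangle inequality $\|M\|_2\le\|X\|_2+\|Y\|_2=\sqrt a+\sqrt b$. Combined with the Frobenius identity this gives
\[
\operatorname{erank}(M)\ge\frac{ka+\mu b}{(\sqrt a+\sqrt b)^2}.
\]
Set $x=\sqrt a$ and $y=\sqrt b$ and minimize $(kx^2+\mu y^2)/(x+y)^2$. By Cauchy--Schwarz,
\[
(x+y)^2\le\Bigl(\frac1k+\frac1\mu\Bigr)(kx^2+\mu y^2),
\]
so the minimum is $k\mu/(k+\mu)$. This is at least $2k\mu/(\sqrt k+\sqrt\mu)^2$ times $\tfrac{(\sqrt k+\sqrt\mu)^2}{2(k+\mu)}$, and that factor lies in $[\tfrac12,1]$. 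The main obstacle is therefore reconciling constants: the exact minimizer gives $k\mu/(k+\mu)$, and $k\mu/(k+\mu)$ is $\ge k\mu/(\sqrt k+\sqrt\mu)^2$ but not $\ge 2k\mu/(\sqrt k+\sqrt\mu)^2$ in general. So the stated constant genuinely needs the spectral-incoherence assumption, through the $\min(k,\mu)$ route above, rather than the triangle inequality alone.

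\textbf{Side claim.} For $\mu\le\operatorname{rank}(Y)\le\min\{T,rd\}$: stable rank never exceeds rank, and $Y$ has $T$ rows. Each column of $Y$ is formed from entries of $X$ rearranged block-wise. Concretely, the $d$ columns in block-column $j$ of $Y$ are the $j$-th row of $X$ reshaped, so they span at most the row structure of $X$. Hence the column space of $Y$ is spanned by at most $rd$ vectors, obtained by writing $X=\sum_{s\le r}u_sv_s^\top$ and block-transposing each rank-one term, each of which has rank at most $d$. Subadditivity of rank over the $r$ terms then gives $\operatorname{rank}(Y)\le rd$.
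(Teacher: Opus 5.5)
Your lower bound does not go through. It relies on $\frac{2k\mu}{(\sqrt k+\sqrt\mu)^2}\le\min(k,\mu)$, which is false. That inequality needs $2\mu\le(\sqrt k+\sqrt\mu)^2$, and this holds only when $\mu\le(1+\sqrt2)^2k$ (and symmetrically). For instance, $k=1$, $\mu=9$ gives $18/16>1$; in general the quantity ranges over $[\tfrac12\min(k,\mu),\,2\min(k,\mu))$. So the chain $\operatorname{erank}(M)\approx k+\mu t\ge k\ge\min(k,\mu)\ge\frac{2k\mu}{(\sqrt k+\sqrt\mu)^2}$ breaks at the last link.

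In your formulation, where $a=\|X\|_2^2$ and $b=\|Y\|_2^2$ are treated as free, the claimed bound is actually false. With $k=1$, $\mu=100$ and $b/a\to0$, your formula gives $\operatorname{erank}(M)\to1<200/121$. Your ``robust alternative'' runs into the same issue: minimizing over unconstrained $(a,b)$ yields only $k\mu/(k+\mu)$. From this you conclude that the stated constant needs spectral incoherence rather than the triangle inequality.

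That conclusion is backwards. The missing fact is the one the paper's proof starts from: $\mathcal T$ only permutes the entries of $X$, so $\|Y\|_F=\|X\|_F$, i.e.\ $ka=\mu b$. With this constraint, your triangle-inequality route is exactly the paper's argument. The bounds $\|M\|_2\le\|X\|_2+\|Y\|_2=\|X\|_F(k^{-1/2}+\mu^{-1/2})$ and $\|M\|_F^2\approx2\|X\|_F^2$ give
\[
\operatorname{erank}(M)\ \ge\ \frac{2}{(k^{-1/2}+\mu^{-1/2})^2}=\frac{2k\mu}{(\sqrt k+\sqrt\mu)^2},
\]
so spectral incoherence is needed only for the upper bound. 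Your unconstrained Cauchy--Schwarz minimizer, $k\sqrt a=\mu\sqrt b$, is infeasible under $ka=\mu b$ unless $k=\mu$, which is why you lost a factor.

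Equivalently, in your first route the constraint forces $t=b/a=k/\mu$ when $a\ge b$, so $\operatorname{erank}(M)\approx2\min(k,\mu)$. This does dominate $\frac{2k\mu}{(\sqrt k+\sqrt\mu)^2}$, because $(\sqrt k+\sqrt\mu)^2\ge\max(k,\mu)$. Dropping the $\mu t$ term is what cost you the needed factor of $2$.

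Your upper bound $k+\mu$ and the rank bound $\mu\le\operatorname{rank}(Y)\le\min\{T,rd\}$ coincide with the paper's.
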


\begin{theorem}[Effective rank failure of standard FFNs]
\label{thm:ffn_failure}
Let $X\in\mathbb R^{T\times D}$ be a representation matrix with effective rank
\(
\operatorname{erank}(X)=k \quad\text{and}\quad \frac{k}{D}\le \gamma,
\)
for some fixed constant $\gamma\in(0,1)$ independent of $D$. 
Consider the per-row feedforward network 
\(
\mathcal F(X)=\phi(XA)B,
\)
where $A\in\mathbb R^{D\times m}$ and $B\in\mathbb R^{m\times D}$ have i.i.d.\ sub-Gaussian entries with variance $1/D$, and $\phi$ is positively homogeneous. 
Then:
\begin{itemize}[leftmargin=15pt,parsep=2pt,itemsep=2pt,topsep=2pt]
    \item (\textbf{Deterministic collapse})  
    If $X$ has an algebraic rank $1$, then
    \(
    \operatorname{rank}(\mathcal F(X))\le 2
    \), and 
    \(
    \operatorname{erank}(\mathcal F(X))\le 2
    \)
    , with equality $=1$ when all rows of $X$ have the same sign.
    \item (\textbf{Probabilistic contraction})  
    If the pre-activations satisfy a nontrivial response-gap condition (See Appendix~\ref{app:thm:ffn_failure}), then with probability at least $1-\exp(-c k)$,
    \(
    \operatorname{erank}(\mathcal F(X)) \le \alpha\,\operatorname{erank}(X),
    \)
    for some constant $\alpha\in(0,1)$ depending only on $\gamma$ and activation $\phi$.
\end{itemize}
\end{theorem}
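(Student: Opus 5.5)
The statement to prove is the last one, Theorem~\ref{thm:ffn_failure}, on FFN failure. I would treat its two parts separately.

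\textbf{Deterministic collapse.} Write the rank-one input as $X = u v^\top$ with $u\in\mathbb R^T$ and $v\in\mathbb R^D$. The pre-activation is then $XA = u (v^\top A) = u w^\top$, where $w = A^\top v\in\mathbb R^m$.

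Positive homogeneity of $\phi$ gives $\phi(u_t w) = u_t\,\phi(w)$ when $u_t\ge 0$, and $\phi(u_t w)=|u_t|\,\phi(-w)$ when $u_t<0$. Split $u = u^+ - u^-$ into its positive and negative parts. Then
\[
\phi(XA) = u^+\phi(w)^\top + u^-\phi(-w)^\top .
\]
This is a sum of at most two rank-one matrices, so $\operatorname{rank}(\phi(XA)B)\le 2$.

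For the effective rank, I use the elementary fact that $\operatorname{erank}(Z)=\|Z\|_F^2/\|Z\|_2^2\le \operatorname{rank}(Z)$, since $\sum_i\sigma_i^2\le r\max_i\sigma_i^2$. This gives $\operatorname{erank}\le 2$.

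If all $u_t$ share a sign, one of $u^\pm$ vanishes. The output is then rank one, and the effective rank equals $1$, unless the output is zero, which I exclude as degenerate.

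\textbf{Probabilistic contraction.} Here I would control the numerator and denominator of $\operatorname{erank}(\mathcal F(X))$ separately. Let $H=\phi(XA)$. Conditional on $H$, the matrix $B$ has i.i.d.\ entries of variance $1/D$, so Hanson--Wright type bounds give $\|HB\|_F^2 \approx \|H\|_F^2$ and $\|HB\|_2^2 \gtrsim \|H\|_2^2$. The second bound comes from testing against the top right singular vector of $H$ together with concentration. It therefore suffices to show $\operatorname{erank}(H)\le\alpha k$.

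To study $H$, I write $\mathbb E_A[H^\top\!H]$ and use the kernel viewpoint. Columns of $XA$ are Gaussian-like vectors $Xa_j$ with covariance $XX^\top/D$. Hence $\mathbb E[\phi(XA)\phi(XA)^\top]=m\,K(X)$, where $K$ is the arc-cosine-type kernel of $\phi$; this kernel is well defined by positive homogeneity.

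The \emph{response-gap condition} is where I would use the hypothesis. I would interpret it as saying that $\phi$'s kernel has a strictly positive ``mean'' or degree-zero component $c_0>0$. For ReLU this is the $\frac{1}{2\pi}\|x\|\|y\|$ term. This component adds an approximately rank-one spike $c_0\, s s^\top$, where $s_t=\|X_t\|$, on top of a part with spectrum comparable to $XX^\top$.

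This spike inflates the top eigenvalue by a constant factor relative to the trace. Since $k\le\gamma D$, the trace is at most a fixed fraction of $D\lambda_{\max}$, so the ratio drops to $\alpha k$ with $\alpha=\alpha(\gamma,\phi)<1$.

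\textbf{Concentration step.} The $1-\exp(-ck)$ probability would come from matrix Bernstein over the $m$ independent columns. The effective dimension in that bound is $\operatorname{erank}(XX^\top)$, which is of order $k$.

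\textbf{Main obstacle.} The hard step is rigorously showing that the spike strictly increases $\|H\|_2^2/\|H\|_F^2$ by a factor independent of $D$. This requires a lower bound on the alignment of $s$ with the top eigenvector, and that is exactly what the response-gap assumption must supply. I would first prove it for ReLU using explicit arc-cosine formulas, then extend to general positively homogeneous $\phi$ via its decomposition $\phi(z)=a z^+ - b z^-$.
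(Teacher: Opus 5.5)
Your deterministic part is the paper's argument. Both write $X=\mathbf c\mathbf v^\top$ and use positive homogeneity to get $\mathcal F(X)=\mathbf c^+w_+^\top+\mathbf c^-w_-^\top$. Your remark $\operatorname{erank}\le\operatorname{rank}$ just makes explicit a step the paper leaves implicit. The probabilistic part takes a different route, and it has a genuine gap.

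First, the paper's response-gap condition is not a constant term in the dual kernel. It assumes two things about $\Sigma_H=\tfrac1T H^\top H$ with $H=XA$. The top-$r$ eigendirections carry an $\eta$-fraction of the energy. The ratio $\rho(u)=\mathbb E[\phi(\langle u,h\rangle)^2]/\mathbb E[\langle u,h\rangle^2]$ is at least $c_{\mathrm{top}}$ on top directions and at most $c_{\mathrm{low}}<c_{\mathrm{top}}$ on tail directions. The paper then bounds $\mathbb E\|\phi(H)\|_F^2\le T M_F\operatorname{tr}(\Sigma_H)$ with $M_F=c_{\mathrm{top}}\eta+c_{\mathrm{low}}(1-\eta)$, and $\lambda_{\max}\ge c_{\mathrm{top}}\lambda_1(\Sigma_H)$. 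It pushes both through $B$ with factors $\kappa_F,\kappa_{\mathrm{op}}$ and reads off $\alpha=\kappa_F M_F/(\kappa_{\mathrm{op}}c_{\mathrm{top}})$. So the contraction is put in by hypothesis, and even then it needs $\eta<1$. Your replacement hypothesis $c_0>0$ is essentially free: for $\phi(z)=az^+-bz^-$ one has $c_0=(a-b)^2/(2\pi)$, which is positive for every nonlinear $\phi$. You are therefore trying to prove contraction with no real extra assumption, and that cannot work.

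Concretely, the step from the spike to $\operatorname{erank}(H)\le\alpha k$ fails.
\begin{itemize}
\item The spike bound needs no alignment argument. For a column $g$ of $\phi(XA)$, $\mathbb E[gg^\top]\succeq\mathbb E g\,\mathbb E g^\top$; alternatively, every term of the dual-kernel expansion is PSD. Hence $\lambda_{\max}(K(X))\ge c_0\|X\|_F^2/D$.
\item Since $\operatorname{tr}K(X)=\tfrac{a^2+b^2}{2}\|X\|_F^2/D$, this gives $\operatorname{erank}\lesssim C_\phi:=\pi(a^2+b^2)/(a-b)^2$, which equals $\pi$ for ReLU.
\item This bound does not depend on $k$. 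It is a contraction by a factor of about $C_\phi/k$, which is below $1$ only when $k>C_\phi$. The hypothesis $k\le\gamma D$ plays no role.
\end{itemize}
For $k$ near $1$, no uniform $\alpha<1$ exists at all. A same-sign rank-one $X$ has output effective rank exactly $1=k$, by your own first part. For ReLU with $k$ orthonormal rows, the expected kernel has effective rank $\pi k/(k-1+\pi)$, whose ratio to $k$ tends to $1$ as $k\to1$.

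What your route actually proves is $\operatorname{erank}(\mathcal F(X))\le C_\phi(1+o(1))$ with high probability for large $m$. It gives the stated form only under an added condition $k\ge k_0>C_\phi$, taking $\alpha=C_\phi(1+\epsilon)/k_0$. So the obstacle you flagged is not the real one; the real one is small $k$. Separately, in matrix Bernstein the intrinsic dimension is a prefactor and the exponent scales with $m$. So a failure probability of $\exp(-ck)$ does not come out of it the way you describe.
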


Theorem~\ref{thm:erank-bound-blocktranspose} explains why token mixing can increase effective rank. 
The block-transpose mixing produces a representation whose effective rank admits the lower bound
\(
\frac{2k\mu}{(\sqrt{k}+\sqrt{\mu})^2}.
\)
showing that mixing introduces a rank-expansion effect. 
However, the theorem also reveals an intrinsic limitation: the achievable improvement is bounded by
\(
\operatorname{erank}(X+Y)\le 2(k+\mu).
\)
Since recommender embeddings are often already spectrally collapsed at initialization stages~\cite{embedding-collapse-1-multiembedding,embedding-collapse-2,embedding-collapse-5}, this bounded expansion typically results in only modest rank improvement in practice. 
This explains the small rank increase observed after token mixing in Figures~\ref{fig:erank-avazu-base-1-rankmixer} and~\ref{fig:erank-avazu-base-3-rankmixer}.

Theorem~\ref{thm:ffn_failure} characterizes the complementary effect of P-FFNs. 
When the input representation is already low-rank, the FFN module can further contract effective rank, either deterministically in the rank-1 case or probabilistically generic rank-\(k\). 
This explains the repeated effective-rank decreases observed in Figures~\ref{fig:erank-avazu-base-2-rankmixer} and~\ref{fig:erank-avazu-base-4-rankmixer}.


\subsubsection*{\bf Summary.} Taken together, the empirical and theoretical analyses reveal a consistent picture of spectra dynamics in \rankmixer. 
Although \rankmixer maintains higher effective rank than conventional recommenders, the improvement is inherently limited and does not reliably prevent collapse from re-emerging across layers. 
This limitation arises from the architecture itself: token mixing provides bounded rank expansion, while P-FFNs exhibit rank-contractive behavior, jointly producing the characteristic damped oscillatory trajectory across depth.

Importantly, this diagnosis suggests a principled direction for architectural refinement — \textbf{Expand More and Shrink Less}, enabling token mixing to produce stronger spectrum expansion while making P-FFNs less spectrum-reductive. 
Motivated by this insight, we propose \unimixer in the next section.
\section{Method: \unimixer}
\label{sec:method}

\begin{figure*}[!t]
  \centering
  \includegraphics[width=\linewidth]{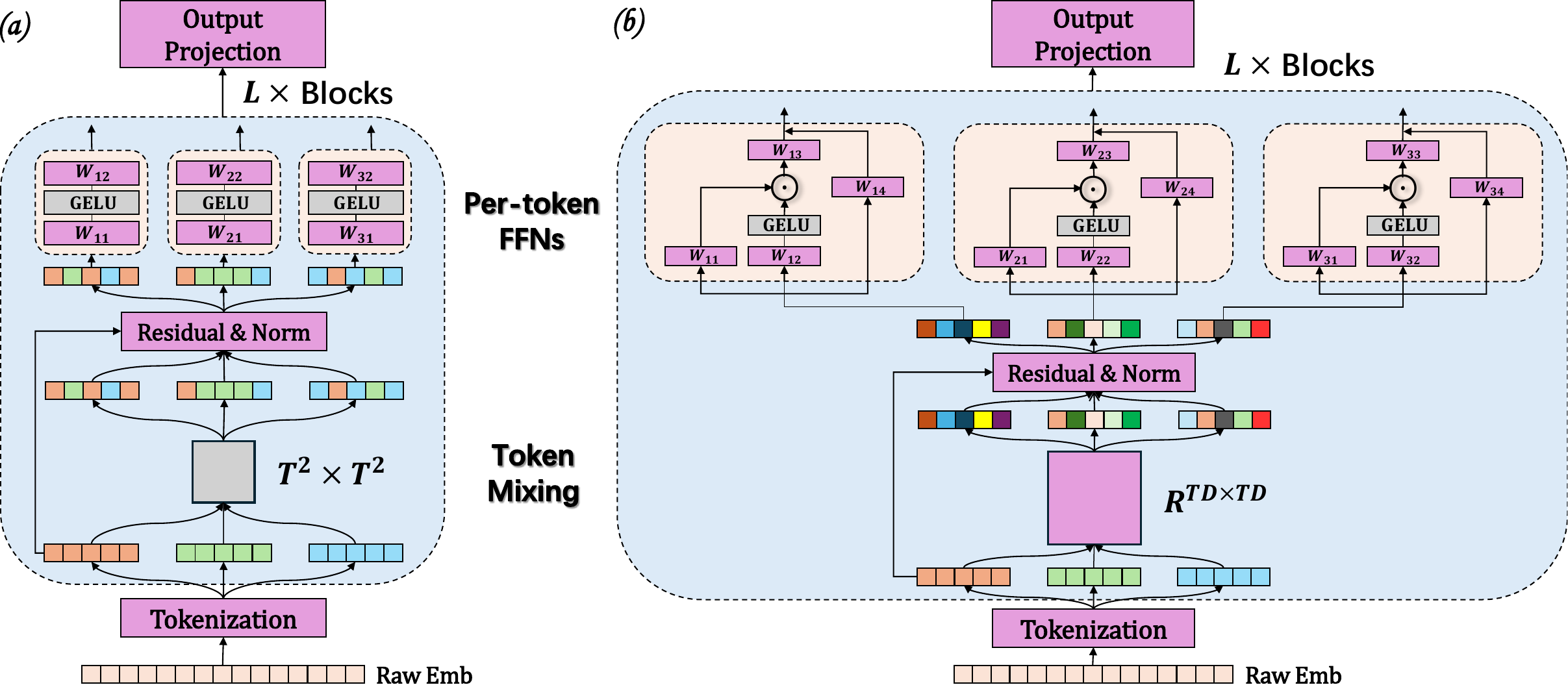}
  \caption{Architectural overview of (\textbf{a}) \rankmixer and (\textbf{b}) \unimixer. \underline{\textcolor{mypurple}{Purple}} modules indicate trainable components, while \underline{\textcolor{mygray}{Gray}} modules indicate non-parametric components. The key difference between the models lies in token-mixing and P-FFNs.}
  \label{fig:arch}
\end{figure*}

In this section, we present \unimixer, a novel deep recommender architecture that generalizes the token-mixing and P-FFN modules of \rankmixer through establishing collapse-robust token transformation. 
We focus on two core architectural components in \unimixer: (\textbf{\romannumeral1}) \textit{Parameterized Full Token Mixing} and (\textbf{\romannumeral2}) \textit{GLU-improved P-FFNs}, both supported by theoretical justification. 
An overview of \unimixer is illustrated in Figure~\ref{fig:arch}(\textbf{b}).


\subsection{Parameterized Full Token Mixing}\label{sec:method:full-mixing}
In \rankmixer, token mixing is implemented as a non-parameteric block-transposed operation. 
This operation is equivalent to
\begin{align}
\label{eq:rankmixer-mixing}
\operatorname{vec}(M^\top) = \operatorname{LN}\big((P\otimes I)\operatorname{vec}(X^\top) + \operatorname{vec}(X^\top)\big)\ ,
\end{align}
where \(P \in \mathbb{R}^{T^2 \times T^2}\) is a permutation operator structured as a \textit{commutation matrix}~\cite{commutation-matrix}, \(I\in\mathbb{R}^{\frac{D}{T}\times\frac{D}{T}}\) denotes the identity matrix, \(\operatorname{vec}\) denotes the \textit{vec operator}~\cite{vec-operator}, \(\operatorname{LN}\) denotes layer normalization, and \(\otimes\) is the \textit{Kronecker product}~\cite{matrix-analysis}.

This representation reveals an important structural property: token mixing is governed by the permutation operator \(P\) and the grid scale \(d = \frac{D}{T}\). 
Modifying either the permutation operator or the grid scale leads to different token-mixing behaviors. 
Motivated by this observation, we generalize Eq.~\ref{eq:rankmixer-mixing} by introducing a learnable mixing operator and reducing the grid scale to the finest resolution. 
This leads to the proposed \textbf{parameterized full mixing}:
\begin{align}
\label{eq:full-mixing}
\operatorname{vec}(M^\top) = \operatorname{LN}\big((W + I)\operatorname{vec}(X^\top)\big)\ ,
\end{align}
where \(W\in\mathbb{R}^{TD\times TD}\) is a learnable mixing matrix and \(I\) preserves residual information flow.

This formulation corresponds to the fine-grained case \(d=1\) with fully parameterized mixing weights, enabling interactions across all token–feature coordinates. 
Such generalization strengthens the token-mixing module by allowing the model to refine representations that may otherwise lead to spectral collapse under restricted block-wise transformations. 
We formalize this expressivity advantage in the following theorem.

\begin{theorem}[Expressivity of parameterized full mixing]
\label{thm:parameterized-block-mixing}
Let $X\in\mathbb R^{T\times D}$ be an input representation matrix and $x=\operatorname{vec}(X)\in\mathbb{R}^{N}$ with $N=TD$. For a divisor $d^{\ast}$ of $N$, partition $x$ into $K=N/d^{\ast}$ blocks $\mathcal{B}=\{\mathbf{b}_1,\dots,\mathbf{b}_K\}$, where $\mathbf{b}_k\in\mathbb{R}^{d^{\ast}}$. A parameterized block-mixing scheme with learnable weights $\mathbf{W}\in\mathbb{R}^{K\times K}$ produces \(\mathbf{y}=(\mathbf{W}\otimes\mathbf{I}_{d^{\ast}})\,\mathbf{x} \) and the residual output \(\mathbf{C} =\Phi_{d^{\ast}}(\mathbf{X};\mathbf{W}) \triangleq \mathbf{X} +\operatorname{reshape}\!\left( (\mathbf{W}\otimes\mathbf{I}_{d^{\ast}})\operatorname{vec}(\mathbf{X}) \right).\) Then for any $d^{\ast}>1$, the set of linear transformations realizable by $\Phi_{d^{\ast}}$ is strictly contained in that of the parameterized full mixing case ($d^{\ast}=1$). In particular, the Kronecker constraint $(\mathbf{W}\otimes\mathbf{I}_{d^{\ast}})$ prevents expressing fine-grained, high-rank coordinate interactions that are attainable when $d^{\ast}=1$.
\end{theorem}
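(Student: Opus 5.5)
We argue at the level of the linear maps acting on $\mathbf{x}=\operatorname{vec}(\mathbf{X})\in\mathbb{R}^N$. Write $\mathcal{L}_{d^{\ast}}=\{\mathbf{I}_N+\mathbf{W}\otimes\mathbf{I}_{d^{\ast}}:\mathbf{W}\in\mathbb{R}^{K\times K}\}$ for the family realized by $\Phi_{d^{\ast}}$, since the reshape is a fixed bijection and the residual contributes $\mathbf{I}_N$. For $d^{\ast}=1$ the Kronecker factor is trivial, so $\mathcal{L}_1=\{\mathbf{I}_N+\mathbf{W}:\mathbf{W}\in\mathbb{R}^{N\times N}\}=\mathbb{R}^{N\times N}$. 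This is an affine translate of the whole matrix space, hence every linear map on $\mathbb{R}^N$. The proof therefore splits into two parts: inclusion $\mathcal{L}_{d^{\ast}}\subseteq\mathcal{L}_1$, and strictness.

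\textbf{Inclusion.} This is immediate. Given $\mathbf{W}$, take $\mathbf{W}'=\mathbf{W}\otimes\mathbf{I}_{d^{\ast}}\in\mathbb{R}^{N\times N}$. Then $\Phi_1(\cdot;\mathbf{W}')=\Phi_{d^{\ast}}(\cdot;\mathbf{W})$.

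\textbf{Strictness.} Every element of $\mathcal{L}_{d^{\ast}}$ equals $(\mathbf{I}_K+\mathbf{W})\otimes\mathbf{I}_{d^{\ast}}$, so it commutes with every matrix of the form $\mathbf{I}_K\otimes\mathbf{S}$ with $\mathbf{S}\in\mathbb{R}^{d^{\ast}\times d^{\ast}}$. More concretely, writing such a map in $K\times K$ blocks of size $d^{\ast}\times d^{\ast}$, each block is a scalar multiple of $\mathbf{I}_{d^{\ast}}$. For $d^{\ast}>1$, pick an explicit $\mathbf{A}\in\mathbb{R}^{N\times N}$ violating this. A convenient witness is $\mathbf{A}=\mathbf{I}_N+\mathbf{e}_1\mathbf{e}_2^\top$, where $\mathbf{e}_1,\mathbf{e}_2$ are the first two coordinates of block $\mathbf{b}_1$. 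Its $(1,1)$ block equals $\mathbf{I}_{d^{\ast}}+\mathbf{E}_{12}$, which is not a scalar multiple of the identity. Hence $\mathbf{A}\in\mathcal{L}_1\setminus\mathcal{L}_{d^{\ast}}$. A dimension count reinforces this: $\mathcal{L}_{d^{\ast}}$ is an affine space of dimension $K^2=N^2/d^{\ast 2}<N^2$.

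\textbf{Rank and fine-grained interpretation.} To support the ``high-rank coordinate interactions'' clause, I show that the spectral structure is constrained as well. Every singular value of $(\mathbf{I}_K+\mathbf{W})\otimes\mathbf{I}_{d^{\ast}}$ has multiplicity divisible by $d^{\ast}$, since the singular values of a Kronecker product are the products of the factors' singular values. Likewise, the mixing part $\mathbf{W}\otimes\mathbf{I}_{d^{\ast}}$ has rank $d^{\ast}\operatorname{rank}(\mathbf{W})$. It also never couples two distinct coordinates within the same position of a block, i.e., coordinate $p$ of $\mathbf{b}_i$ with coordinate $q\neq p$ of $\mathbf{b}_j$. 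Consequently, any target map that needs a cross-coordinate interaction, or a spectrum with a simple singular value, such as $\operatorname{diag}(1,2,\dots,N)$, is attainable for $d^{\ast}=1$ but not for $d^{\ast}>1$.

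The main obstacle is formalizing this last clause, because ``high-rank interaction'' is informal. I would make it precise through the multiplicity and coupling constraints above rather than through raw rank. Raw rank alone does not separate the two families: $\mathbf{I}_N+\mathbf{W}\otimes\mathbf{I}_{d^{\ast}}$ can already have full rank.
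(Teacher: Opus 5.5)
Your proof is correct, but it takes a different route from the paper.

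\textbf{Your route.} You compare the two families as sets of operators on $\mathbb{R}^N$. $\mathcal{L}_1$ is all of $\mathbb{R}^{N\times N}$. Every element of $\mathcal{L}_{d^{\ast}}$ has the form $(\mathbf{I}_K+\mathbf{W})\otimes\mathbf{I}_{d^{\ast}}$, so its $d^{\ast}\times d^{\ast}$ blocks are scalar multiples of the identity. From this you get an explicit witness, the dimension count $K^2<N^2$, and the invariant that every singular-value multiplicity is divisible by $d^{\ast}$.

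\textbf{The paper's route.} The paper instead fixes an input $X\neq 0$ and compares reachable output sets $\mathcal{R}_{d^{\ast}}(X)=\{\Phi_{d^{\ast}}(X;\mathbf{W}):\mathbf{W}\in\mathbb{R}^{K\times K}\}$. At $d^{\ast}=1$, the rank-one choice $\mathbf{W}=(\mathbf{z}-\mathbf{x})\mathbf{x}^\top/(\mathbf{x}^\top\mathbf{x})$ reaches every target. For $d^{\ast}>1$, each perturbation block $\mathbf{p}_k=\sum_j W_{kj}\mathbf{b}_j$ is confined to $V_{\mathrm{blocks}}=\operatorname{span}\{\mathbf{b}_j\}$. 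The paper then argues heuristically that low effective rank makes this span small.

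\textbf{What each buys.} The paper's per-input view ties the result to the collapse narrative: it describes which outputs a collapsed $X$ cannot be pushed to. Its strictness, however, is conditional. If the blocks of $X$ span $\mathbb{R}^{d^{\ast}}$, as happens for generic $X$ whenever $K\ge d^{\ast}$, each row of $\mathbf{W}$ can independently produce any $\mathbf{p}_k$. Then $\mathcal{R}_{d^{\ast}}(X)=\mathbb{R}^{T\times D}$ and there is no gap for that input. Your operator-level argument proves the statement as actually phrased, strict containment of the sets of linear transformations, and does so unconditionally for every $d^{\ast}>1$. Your multiplicity and coordinate-coupling constraints also give a rigorous reading of the informal ``fine-grained, high-rank interactions'' clause, cleaner than the paper's appeal to effective rank. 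What your argument does not give is any input-dependent statement linking the deficiency to $\operatorname{erank}(X)$.
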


The theorem's justification is presented in Appendix~\ref{app:thm:parameterized-block-mixing}. 
The theorem shows that when the grid scale satisfies \(d^{\ast}>1\), there always exist nontrivial inputs whose token-mixing outputs remain spectrally constrained. 
In contrast, the fine-grained case \(d^{\ast}=1\), corresponding to our parameterized full mixing, removes the Kronecker constraint and enables richer coordinate interactions, improving robustness against representational collapse.


\subsection{GLU-improved P-FFNs}\label{sec:method:geglu-ffn}
Although parameterized full mixing improves the expressivity of token representations at the matrix level—in terms of effective rank—its collapse mitigation effect is primarily statistical rather than deterministic. 
In practical recommendation scenarios, highly skewed or degenerate inputs may still lead to degraded representations. 
To further improve robustness, we refine the second core module of \rankmixer, namely the P-FFN.

We replace the GELU-based FFN with a GLU-style gated feed-forward module (\textbf{bias omitted for clarity}):
\begin{align}
\label{eq:geglu-ffn}
Z_t = \big(\operatorname{GELU}(M_t W_1)\odot(M_t W_2)\big)W_3 + M_t W_r\ ,
\end{align}
where \(W_1, W_2\in\mathbb{R}^{D\times rD}\) are lifting projections, \(W_3\in\mathbb{R}^{rD\times D}\) is the compression projection, \(r\) is the expansion ratio, \(W_r\) is a learnable residual mapping, and $\odot$ denotes the \textit{Hadamard product}~\cite{linear-algebra}. 
This design follows the gated activation principle of GLU~\cite{GLU-activation}, which has been shown to improve expressivity and representation quality in modern deep architectures~\cite{GLU-llm-1,GLU-llm-2,GLU-llm-3}.

We next show that the GLU-improved P-FFN provides stronger effective rank recovery capability than the GELU-based FFN used in \rankmixer.

\begin{theorem}[Effective rank recovery via GLU-improved P-FFNs]
\label{thm:geglu_recovery}
Let $X\in\mathbb R^{T\times D}$ satisfy $\operatorname{erank}(X)=k$ with $k/D\le \gamma$ for a fixed constant $\gamma\in(0,1)$.  
Consider the GLU-improved P-FFN with residual
\[
\mathcal G(X)=\big(\phi(XA)\odot(XC)\big)B + XD,
\]
where $A,C\in\mathbb R^{D\times m}$ and $B\in\mathbb R^{m\times D}$ have i.i.d.\ sub-Gaussian entries with variance $1/D$, and $\odot$ denotes the Hadamard product. 
If the hidden width satisfies $m \ge C k\log D$, then with probability at least $1-\exp(-c k)$:
\begin{itemize}[leftmargin=15pt,parsep=2pt,itemsep=2pt,topsep=2pt]
    \item (\textbf{Algebraic lifting})  
    The multiplicative term induces degree-2 interactions, and
    \(
    \operatorname{rank}\!\big(\phi(XA)\odot(XC)\big)\;\ge\;\min\!\left(D,\frac{k(k+1)}{2}\right).
    \)
    \item (\textbf{Effective rank increase})  
    The output satisfies
    \(
    \operatorname{erank}(\mathcal G(X)) \ge \operatorname{erank}(X) + \delta,
    \)
    for some $\delta>0$ depending only on $\gamma$ and initialization constants.
\end{itemize}
\end{theorem}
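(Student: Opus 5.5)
The plan is to treat the two bullets separately. We first establish the algebraic-lifting rank bound for the gated hidden matrix $H=\phi(XA)\odot(XC)\in\mathbb R^{T\times m}$. Then we show that $B$ and the residual $XD$ transport the added spectral mass to the output without spectral-norm blow-up.

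\textbf{Algebraic lifting.} Write $X=U\Sigma V^\top$ and reduce to the $k$-dimensional dominant row subspace, i.e.\ $X\approx Z Q^\top$ with $Z\in\mathbb R^{T\times k}$. Row $t$ of $H$ has entries $h_{tj}=\phi(z_t^\top a_j)\,(z_t^\top c_j)$, where $a_j=Q^\top A_{:,j}$ and $c_j=Q^\top C_{:,j}$. These projected columns are again isotropic sub-Gaussian in $\mathbb R^k$.

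For positively homogeneous $\phi$, split $\phi(u)=\alpha_+ u_+ +\alpha_- u_-$. On the sign region $\{z^\top a_j>0\}$, $h_{tj}$ is the quadratic form $\alpha_+ z_t^\top(a_jc_j^\top)z_t$. The span of the symmetrized rank-one matrices $\{\mathrm{sym}(a_jc_j^\top)\}_{j\le m}$ is all of $\mathrm{Sym}(k)$, which has dimension $k(k+1)/2$, as soon as $m\gtrsim k^2$. Under the weaker hypothesis $m\ge Ck\log D$ we would instead argue as follows.
\begin{itemize}
\item The relevant quantity is the rank of the $T\times m$ evaluation matrix.
\item We bound its smallest nonzero singular value using a matrix Bernstein / $\varepsilon$-net argument on the random features $\{(a_j,c_j)\}$.
\item This gives the rank bound on the degree-$2$ feature map $z\mapsto z\otimes z$ restricted to the data, truncated at $D$.
\end{itemize}
The sign pattern of $\phi$ only adds further (piecewise) degree-$2$ directions and cannot cancel them generically. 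This follows from a Zariski-genericity statement: the determinant of a suitable $\min(D,k(k+1)/2)$ minor is a nonzero polynomial in $(A,C)$ on each sign cell, so it vanishes with probability zero, and the sub-Gaussian anti-concentration is quantified to obtain the $1-e^{-ck}$ rate.

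\textbf{Effective-rank increase.} Decompose $\mathcal G(X)=HB+XD$.
\begin{itemize}
\item \emph{Residual term.} Since $D$ is a random matrix with variance $1/D$ and $k/D\le\gamma$, standard concentration gives $\|XD\|_F^2\approx\|X\|_F^2$ and $\|XD\|_2\le(1+O(\sqrt{\gamma}))\|X\|_2$. Hence $\operatorname{erank}(XD)\ge(1-\epsilon)k$.
\item \emph{Gated term.} By the lifting step, $HB$ carries energy on directions essentially orthogonal to the row span of $XD$. Because $B$ is independent of $D$, the cross term $\langle HB,XD\rangle_F$ concentrates at $0$. Therefore
\[
\|\mathcal G(X)\|_F^2\approx\|XD\|_F^2+\|HB\|_F^2 \quad\text{and}\quad \|\mathcal G(X)\|_2^2\le(\|XD\|_2+\|HB\|_2)^2.
\]
\end{itemize}
The key estimate is that $\|HB\|_2^2\le \|HB\|_F^2/\rho$, where $\rho$ is an effective-rank lower bound for $HB$ that exceeds $k$ by a constant factor because of the lifting. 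Substituting gives $\operatorname{erank}(\mathcal G(X))\ge k+\delta$, with $\delta$ depending on $\gamma$ and the variance constants. This is the same Frobenius-orthogonality and spectral-additivity bookkeeping used in Theorem~\ref{thm:erank-bound-blocktranspose}, now applied to $XD$ and $HB$.

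\textbf{Main obstacle.} The hardest step is controlling $\|HB\|_2$ relative to $\|HB\|_F$. The product $\phi(XA)\odot(XC)$ has a nonzero-mean component: $\mathbb E[\phi(u)u]\ne0$ produces a rank-one "diagonal" term proportional to the row norms. That spike can dominate the spectral norm and destroy the gain. I would first subtract this mean component explicitly (it equals $c_\phi\,\mathrm{diag}(\|x_t\|^2)\mathbf 1$-like structure) and absorb it into the residual analysis. The centered part would then be bounded via a Hanson--Wright-type inequality for the degree-$2$ chaos.

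If that fails, I would fall back to a weaker statement with $\delta$ scaling like the energy ratio of the centered part. That statement is still strictly positive under the paper's nondegeneracy assumptions.
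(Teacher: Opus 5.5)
\textbf{Algebraic lifting.} This step cannot be completed under the stated width, and in general no argument can. The matrix $\phi(XA)\odot(XC)$ is $T\times m$, so its rank is at most $\min(T,m)$. Whenever $\lceil Ck\log D\rceil<\min(D,k(k+1)/2)$ (e.g.\ $D\asymp k^2$ with $k$ large), the smallest admissible width already rules out the claimed bound, so no matrix-Bernstein or $\varepsilon$-net estimate on the evaluation matrix can deliver it. Your remark that spanning $\mathrm{Sym}(k)$ needs $m\gtrsim k^2$ is a genuine obstruction, not a technicality.

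The row count $T$ bites too. Take $T=k\ge2$, $D\ge k/\gamma$, and $X$ with rows $e_1^\top,\dots,e_k^\top$. Then $\operatorname{erank}(X)=k$ and $\operatorname{rank}(\phi(XA)\odot(XC))\le k<\min(D,k(k+1)/2)$. Also $\operatorname{erank}(\mathcal G(X))\le\operatorname{rank}(\mathcal G(X))\le k$, so both bullets fail for every realization of the weights.

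The paper's argument reaches its conclusion only by tacitly strengthening the hypotheses:
\begin{itemize}
\item it takes $m\ge C_0\tfrac{k(k+1)}{2}\log D$;
\item it uses an exact factorization $X=SV^\top$ with $k$ columns, which together with $\operatorname{erank}(X)=k$ even forces $k$ equal nonzero singular values;
\item it assumes full column rank of the monomial matrix with rows $\operatorname{vecs}(s_ts_t^\top)$, which requires $T\ge k(k+1)/2$ and points not all on one homogeneous quadric.
\end{itemize}
Your ``$X\approx ZQ^\top$'' has a defect related to the second point, since stable rank $k$ does not put most of the energy in $k$ directions.

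Your Zariski step needs the same data nondegeneracy, because whether the relevant minor is a nonzero polynomial in $(A,C)$ depends on $X$. Moreover, ``vanishes with probability zero'' fails for discrete sub-Gaussian entries such as Rademacher. Finally, the paper handles $\phi$ by a Taylor expansion $\phi(z)=a_1z+a_2z^2+O(z^3)$, which covers GELU. Your exact sign-cell decomposition needs positive homogeneity, which the statement does not assume and GELU lacks.

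\textbf{Effective-rank increase.} This step has an independent gap. Set $a=\|XD\|_2$, $b=\|HB\|_2$, $t=b/a$, assume $\operatorname{erank}(HB)\ge\rho$, and take $\operatorname{erank}(XD)=k$. From Frobenius orthogonality and $\|\mathcal G(X)\|_2\le a+b$, the bound you obtain is $(k+\rho t^2)/(1+t)^2$. For $\rho>k$ this exceeds $k$ if and only if $t>2k/(\rho-k)$. For small $t$ it is strictly below $k$; its minimum is $k\rho/(k+\rho)$, at $t=k/\rho$. The same loss appears in Theorem~\ref{thm:erank-bound-blocktranspose}, whose triangle-inequality lower bound equals $k/2$ when $\mu=k$.

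So this bookkeeping yields an additive gain only if the gated branch dominates the residual in operator norm, which you do not show. That comes on top of the unproved $\rho\ge(1+c)k$: an algebraic-rank bound gives no stable-rank bound, and in any case $\rho\le T$.

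The paper works in the opposite regime. It asserts that $\|\mathcal G(X)\|_2^2$ stays comparable to $\|XD\|_2^2$ while the multiplicative branch adds Frobenius mass orthogonal to the linear path. That gives $\operatorname{erank}(\mathcal G(X))\approx\operatorname{erank}(XD)+\|HB\|_F^2/\|XD\|_2^2$, which is how it reaches $\delta=\Theta\big(k\log D/\sigma_{\max}^2(XD)\big)$.

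In either regime the gain depends on the scale of $X$. With $D=I$ and continuous $\phi$ satisfying $\phi(0)=0$, one has $\mathcal G(sX)/s\to X$ as $s\downarrow0$. Hence $\operatorname{erank}(\mathcal G(sX))\to\operatorname{erank}(X)$, and no $\delta$ depending only on $\gamma$ can hold.

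Your mean-spike diagnosis is also off. Because $A$ and $C$ are independent, $\mathbb E[\phi(x_t^\top a_j)\,x_t^\top c_j]=0$. The uncentered part of the gate instead contributes $\operatorname{diag}(\nu)\,XC$ with $\nu_t=\mathbb E_A\,\phi(x_t^\top a_j)$. This is a row-reweighted copy of the linear path, not a rank-one spike.

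Lastly, the statement does not make the residual matrix $D$ random, so your residual estimate rests on an added assumption.
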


The theorem's justification is presented in Appendix~\ref{app:thm:geglu_recovery}. 
This result shows that GLU-improved P-FFNs can statistically maintain spectrum robustness through multiplicative feature interaction, whereas conventional activation-based FFNs may still exhibit nontrivial failure cases that lead to collapse in effective rank. 
Consequently, the proposed P-FFN improves robustness against representational collapse while maintaining the standard feed-forward computation structure used in \rankmixer.


\subsection{Complexity Analysis}\label{sec:method:complexity}
We compare \rankmixer and \unimixer via the computational and parameter complexity of token mixing and P-FFN modules.

\subsubsection*{\bf On token mixing.} The token mixing module in \rankmixer is implemented as a block-transpose permutation, which incurs $\mathcal{O}(TD)$ computation and introduces no additional parameters. 
In contrast, \unimixer employs \textbf{Parameterized Full Mixing}, which applies a dense linear transformation over the flattened representation. 
This results in $\mathcal{O}(T^2D^2)$ computation and $\mathcal{O}(T^2D^2)$ parameters.

\subsubsection*{\bf On P-FFN} The P-FFN module in \rankmixer has computational complexity $\mathcal{O}(TrD^2)$ and parameter complexity $\mathcal{O}(rD^2)$. 
The \unimixer P-FFN adopts a \textbf{GLU-improved P-FFN}, which increases the feed-forward cost to $\mathcal{O}\big(T(3rD^2 + D^2)\big)$ with parameter complexity $(3r+1)D^2$. 
This introduces a constant-factor increase compared with the P-FFN in \rankmixer.

Overall, the primary complexity difference between \rankmixer and \unimixer arises from the token-mixing module, where \unimixer introduces additional parameters and computation through Parameterized Full Mixing. 
The GLU-improved P-FFN contributes a smaller, constant-factor increase in complexity.



\section{Experiments}
\label{sec:exps}

In this section, we conduct extensive benchmark experiments to answer the following research questions:
\begin{itemize}[leftmargin=15pt,parsep=2pt,itemsep=2pt,topsep=2pt]
    \item {\bf RQ1:} Does \unimixer outperform existing baselines on standard CTR prediction benchmarks?
    \item {\bf RQ2:} How effectively does \unimixer mitigate embedding collapse compared with \rankmixer?
    \item {\bf RQ3:} How does \unimixer scale compared with \rankmixer?
    \item {\bf RQ4:} To what extent does \unimixer generalize beyond CTR prediction tasks?
\end{itemize}


\begin{table}[t]
    \centering
    \caption{Statistics of benchmark datasets.}
    \begin{tabular}{lcccc}
        \toprule
        Dataset & \#Train Size & \#Valid Size & \#Test Size & \#Fields \\
        \midrule
        Criteo   & 33.0M & 8.3M & 4.6M & 39 \\
        Avazu  & 32.3M & 4.0M & 4.0M & 24 \\
        \bottomrule
    \end{tabular}
    \label{tab:stat}
\end{table}

\subsection{Setup}\label{sec:exps:setup}
\paragraph{Datasets.} We conduct experiments on two industrial-scale CTR prediction benchmarks, Criteo~\cite{criteo} and Avazu~\cite{avazu}, which are widely used for evaluating recommendation models in practical settings. 
Dataset statistics are summarized in Table~\ref{tab:stat}.

\paragraph{Metrics.} We report AUC and LogLoss, following standard evaluation protocols for CTR prediction~\cite{embedding-collapse-1-multiembedding,feagen,dcnv2+crossnet,xdeepfm+cin,autoint}. 
To evaluate representation collapse, we report effective rank defined in Eq.~\ref{eq:norm-erank}.

\paragraph{Baselines.} We compare \unimixer with \rankmixer and several representative feature-interaction models that demonstrate strong performance in real-world CTR systems, including xDeepFM~\cite{xdeepfm+cin}, DCNv2~\cite{dcnv2+crossnet}, AutoInt~\cite{autoint}, and a standard DNN (MLP). 
For both \unimixer and \rankmixer, we use a two-layer architecture (i.e., two stacked blocks). 
We set \((T, D) = (15, 26)\) on Criteo and \((T, D) = (16, 24)\) on Avazu. 
The hidden dimension of the P-FFNs is set to \(D\), and the GLU-improved P-FFNs in \unimixer use an expansion ratio \(r = 3\). 
For the remaining baselines, we adopt the recommended configurations provided in the open-source FuxiCTR library~\cite{benchmark-fuxictr}.

\paragraph{Experimental protocol.} All models are implemented using the open-source FuxiCTR benchmark library~\cite{benchmark-fuxictr}, following its officially recommended training pipeline. 
We set the embedding dimension to 20 for Criteo and 16 for Avazu. 
Models are trained for up to 100 epochs with early stopping triggered when validation loss does not improve for two consecutive epochs. 
We use batch sizes of 4096 for Criteo and 10000 for Avazu. 
Each reported result is averaged over 10 runs with different random initializations to ensure statistical reliability.



\begin{table}[t]
\centering
\caption{Overall CTR prediction performance of \unimixer and baseline models on Criteo and Avazu.}
\label{tab:overall}
\begin{tabular}{lcc|cc}
\toprule
& \multicolumn{2}{c}{Criteo} & \multicolumn{2}{c}{Avazu} \\
Model & AUC $\uparrow$ & LogLoss $\downarrow$ & AUC $\uparrow$ & LogLoss $\downarrow$ \\
\midrule
MLP  & 0.81307 & 0.43927  & 0.79226 & 0.37247 \\
xDeepFM  & 0.81334 & 0.43849  & 0.79242 & 0.37236 \\
DCNv2  & 0.81365 & 0.43816  & 0.79258 & 0.37227 \\
AutoInt  & 0.81331 & 0.43853  & 0.79072 & 0.37430	\\
\rankmixer  & 0.81375 & 0.43799  & 0.79270 & 0.37218 \\ \midrule
\unimixer  & \textbf{0.81482} & \textbf{0.43730}  & \textbf{0.79323} & \textbf{0.37196} \\
\bottomrule
\end{tabular}
\end{table}

\begin{table}[t]
\centering
\caption{Module-wise ablation study for token mixing and P-FFN in \unimixer and \rankmixer.}
\label{tab:ablation}
\setlength{\tabcolsep}{2pt}
\begin{tabular}{ccc|cc}
\toprule
                 & \multicolumn{2}{c|}{Criteo}             & \multicolumn{2}{c}{Avazu}               \\
Model            & AUC $\uparrow$   & LogLoss $\downarrow$ & AUC $\uparrow$   & LogLoss $\downarrow$ \\ \midrule
\unimixer        & \textbf{0.81482} & \textbf{0.43730}     & \textbf{0.79323} & \textbf{0.37196}     \\
w/o Full Mixing  & 0.81413          & 0.43785              & 0.79289          & 0.37210              \\
ReLU-based FFN  & 0.81326          & 0.43869              & 0.79241          & 0.37229              \\
GELU-based FFN  & 0.81349          & 0.43851              & 0.79288          & 0.37212              \\ \midrule
\rankmixer       & 0.81375          & 0.43799              & 0.79270          & 0.37218              \\
GLU-style FFNon & 0.81393          & 0.43802              & 0.79286          & 0.37212              \\
ReLU-based FFN  & 0.81284          & 0.43917              & 0.79227          & 0.37238              \\ \bottomrule
\end{tabular}
\end{table}

\begin{figure}[t]
  \centering
    \begin{subfigure}{0.495\linewidth}
        \centering
        \includegraphics[width=1.02\linewidth]{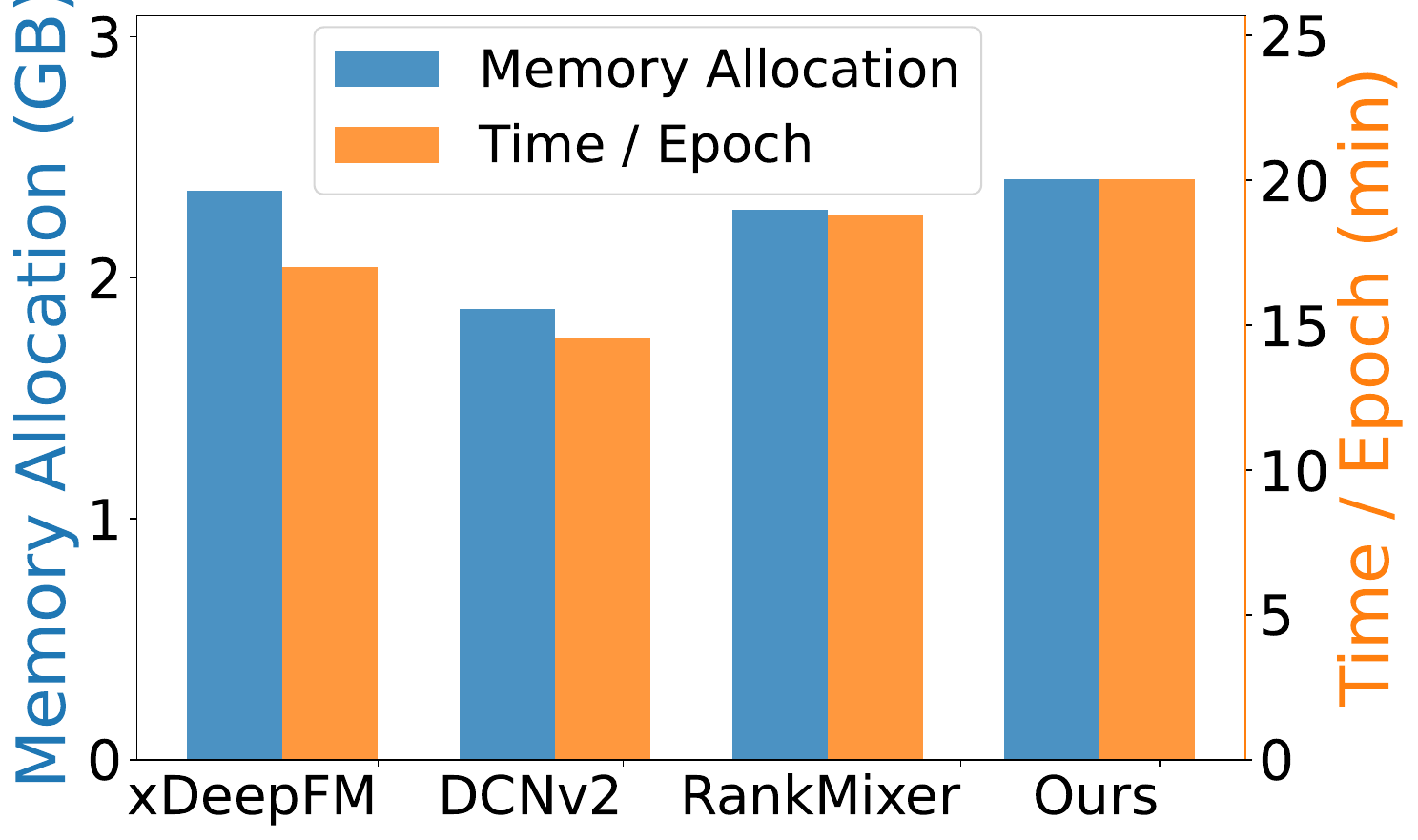}
        \caption{Results on Criteo.}
        \label{fig:efficiency-criteo}
    \end{subfigure}
    \begin{subfigure}{0.495\linewidth}
        \centering
        \includegraphics[width=1.02\linewidth]{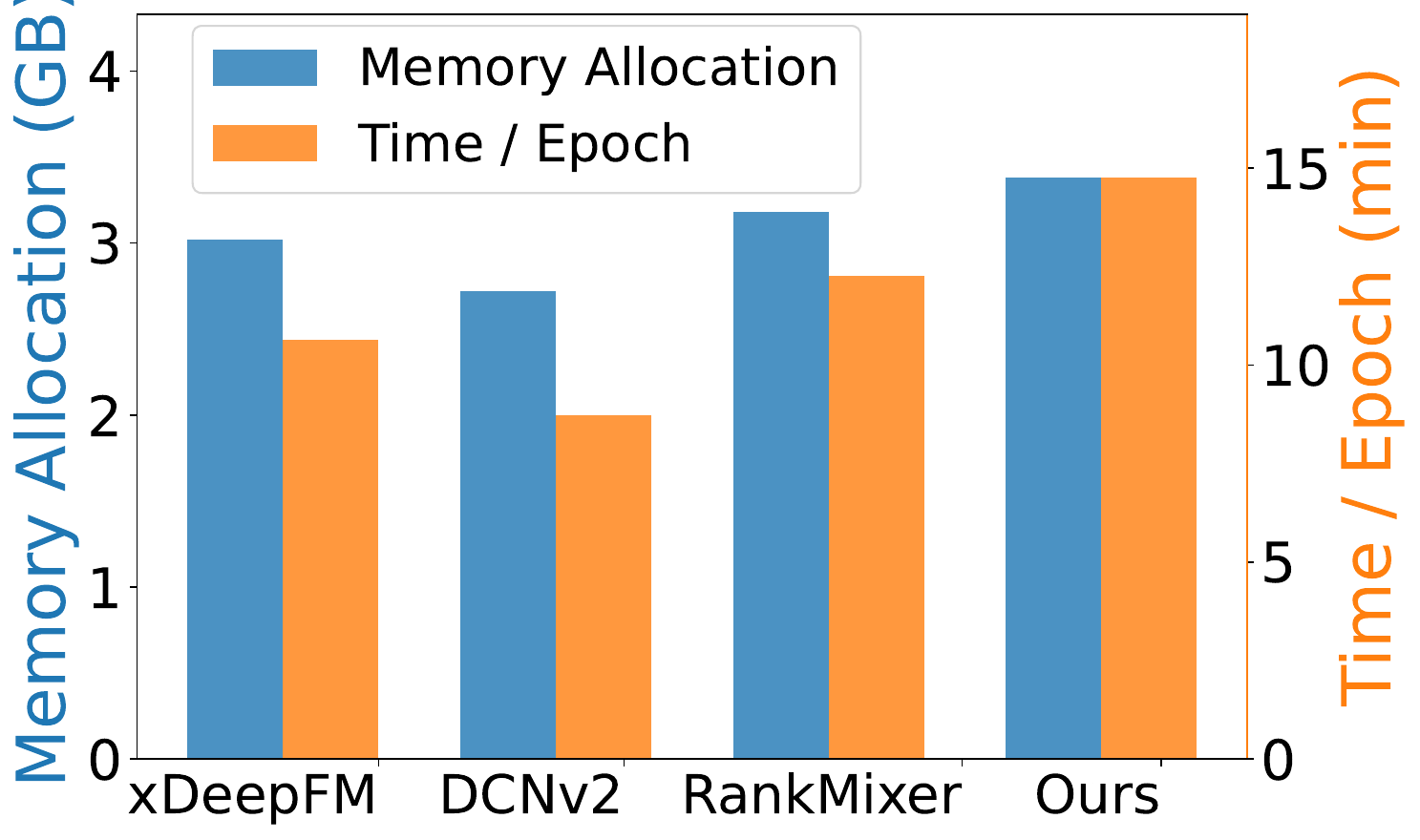}
        \caption{Results on Avazu.}
        \label{fig:efficiency-avazu}
    \end{subfigure}
    %
  \caption{Efficiency comparison of \unimixer and \rankmixer, alongside other baselines for reference.}
  \label{fig:efficiency}
\end{figure}

\begin{figure*}[!t]
  \centering
    \begin{subfigure}{0.247\textwidth}
        \centering
        \includegraphics[width=\linewidth]{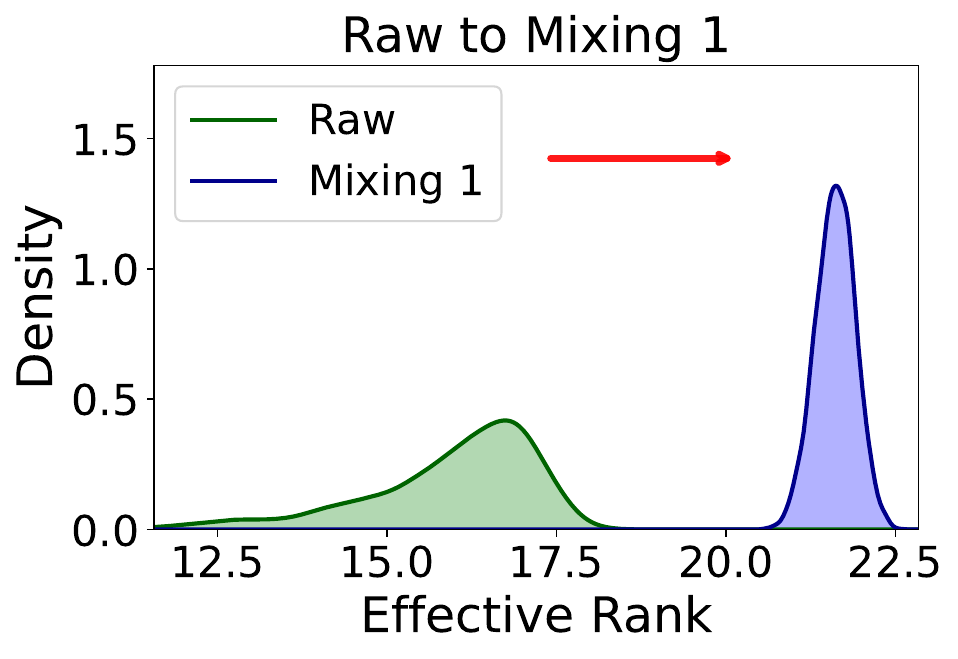}
        \label{fig:erank-criteo-1-main}
    \end{subfigure}
    \begin{subfigure}{0.247\textwidth}
        \centering
        \includegraphics[width=\linewidth]{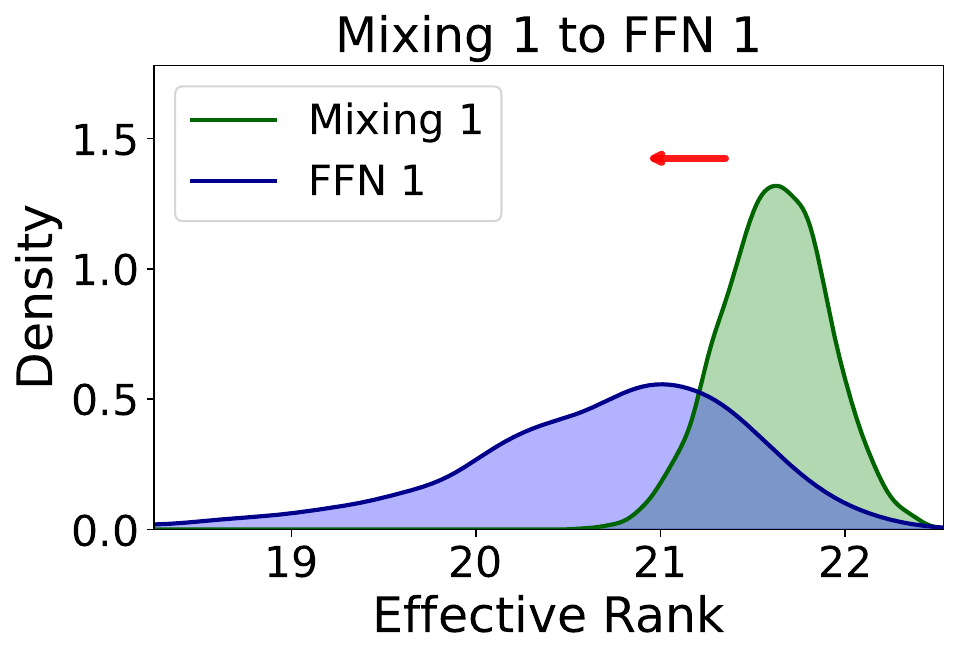}
        \label{fig:erank-criteo-2-main}
    \end{subfigure}
    \begin{subfigure}{0.247\textwidth}
        \centering
        \includegraphics[width=\linewidth]{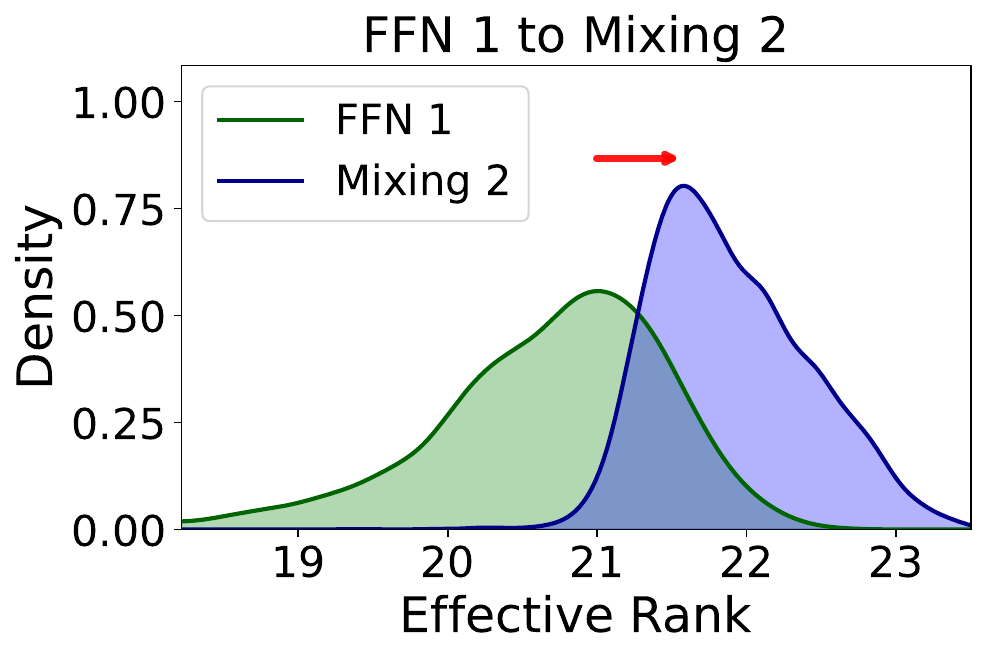}
        \label{fig:erank-criteo-3-main}
    \end{subfigure}
    \begin{subfigure}{0.247\textwidth}
        \centering
        \includegraphics[width=\linewidth]{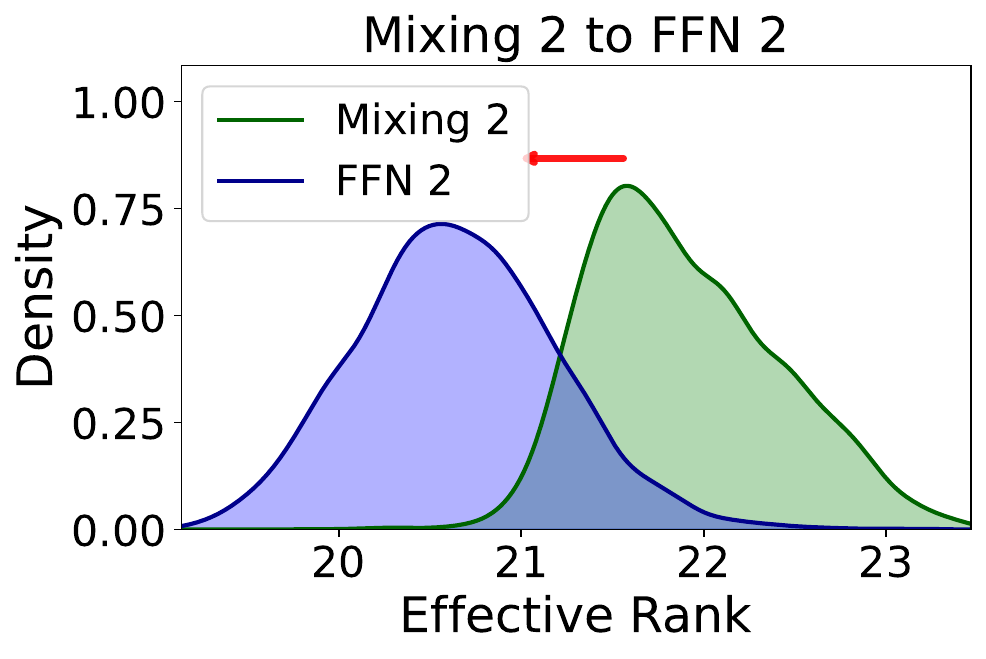}
        \label{fig:erank-criteo-4-main}
    \end{subfigure}
    \\
    \begin{subfigure}{0.247\textwidth}
        \centering
        \includegraphics[width=\linewidth]{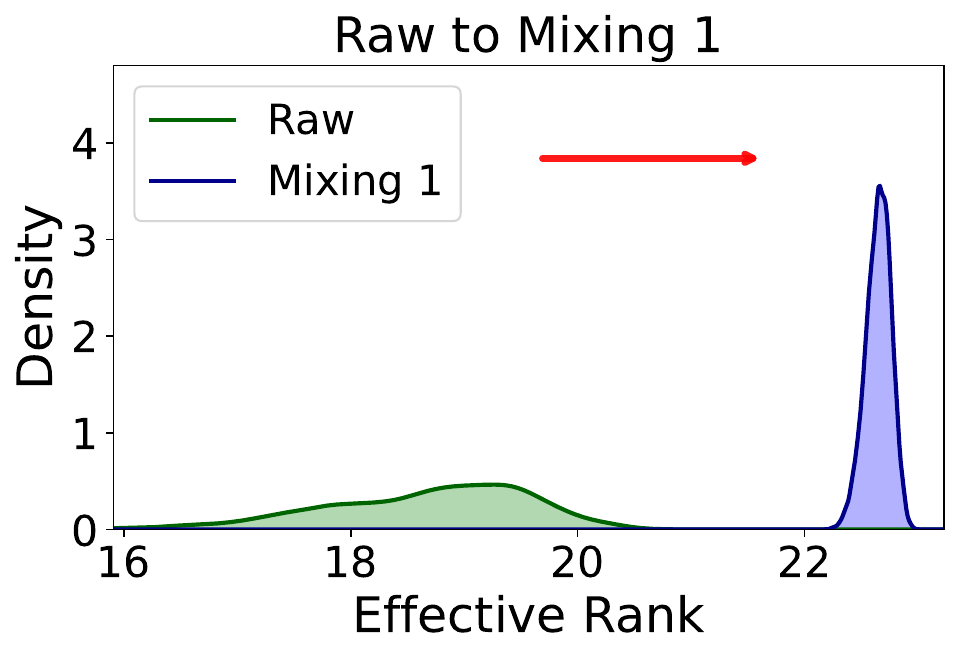}
        \caption{Raw Embedding $\to$ Mixing 1}
        \label{fig:erank-avazu-1-main}
    \end{subfigure}
    \begin{subfigure}{0.247\textwidth}
        \centering
        \includegraphics[width=\linewidth]{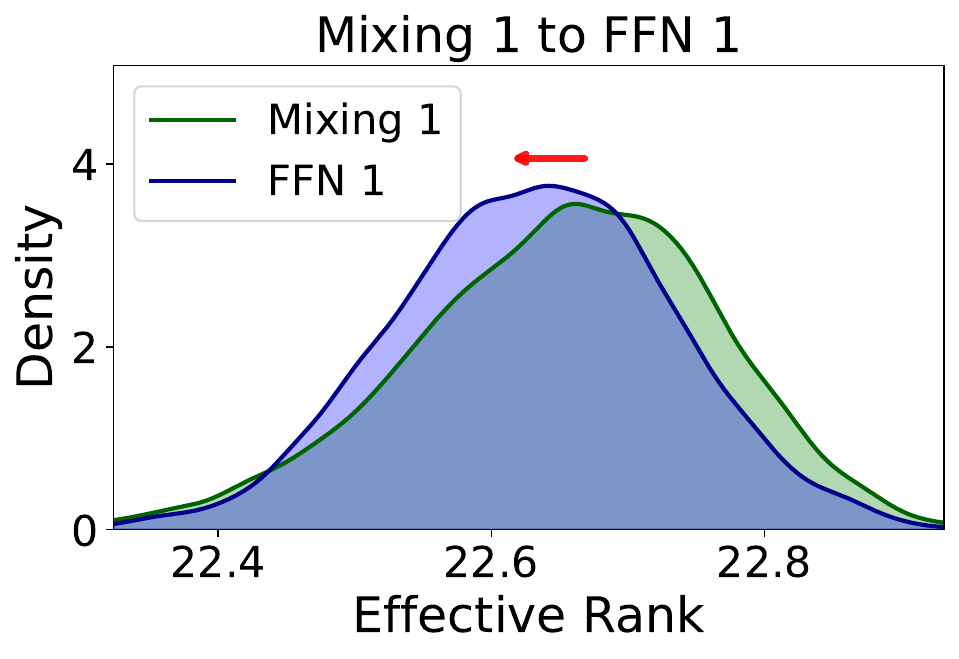}
        \caption{Mixing 1 $\to$ FFN 1}
        \label{fig:erank-avazu-2-main}
    \end{subfigure}
    \begin{subfigure}{0.247\textwidth}
        \centering
        \includegraphics[width=\linewidth]{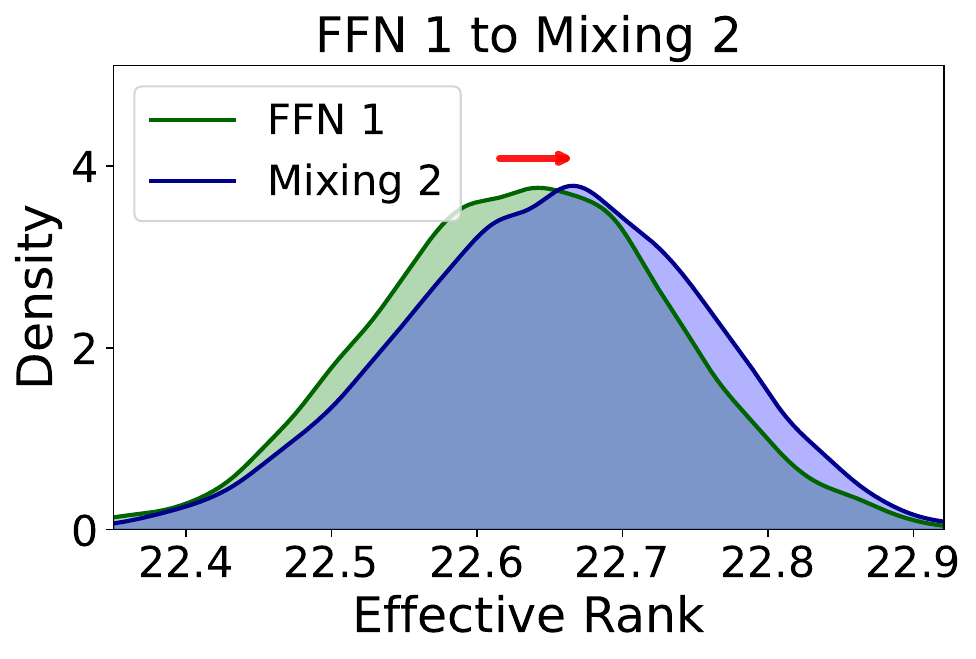}
        \caption{FFN 1 $\to$ Mixing 2}
        \label{fig:erank-avazu-3-main}
    \end{subfigure}
    \begin{subfigure}{0.247\textwidth}
        \centering
        \includegraphics[width=\linewidth]{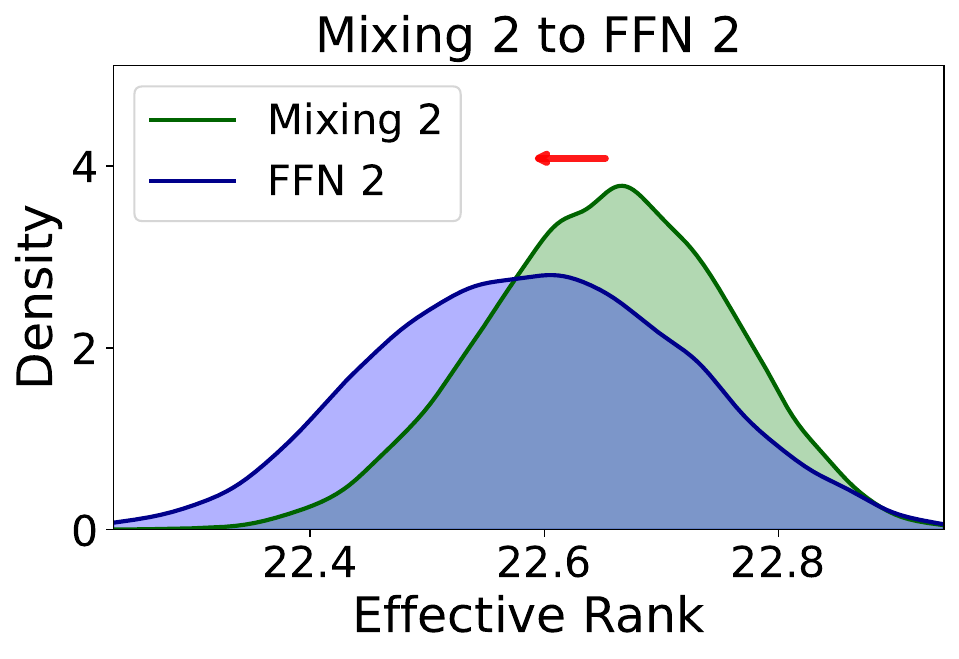}
        \caption{Mixing 2 $\to$ FFN 2}
        \label{fig:erank-avazu-4-main}
    \end{subfigure}
  \caption{Distributional shift of per-sample effective rank across representation stages in \unimixer, from raw embeddings to successive modules on Criteo (top) and Avazu (bottom). Results correspond to the illustration in Figure~\ref{fig:erank-rankmixer}.}
  \label{fig:erank-main}
\end{figure*}

\subsection{RQ1: CTR Prediction Performance}
\label{sec:exps:exps_overall}

\subsubsection*{\bf Quantitative comparisons.} We first present the quantitative performance comparison between \unimixer and baseline models. 
As shown in Table~\ref{tab:overall}, \unimixer consistently achieves better performance than all baselines on both AUC and LogLoss, including \rankmixer. In particular, \unimixer improves AUC by up to \textbf{0.001} over the strongest competing model, which is considered a statistically meaningful improvement on industrial-scale CTR benchmarks~\cite{DIN}. 
These results validate the effectiveness of the proposed refinements and the resulting \unimixer architecture. 
Notably, the parameter size of \unimixer remains comparable to other baselines, indicating that the performance gain is achieved without introducing excessive model complexity.

\begin{figure}[!t]
  \centering
    \begin{subfigure}{0.495\linewidth}
        \centering
        \includegraphics[width=1.02\linewidth]{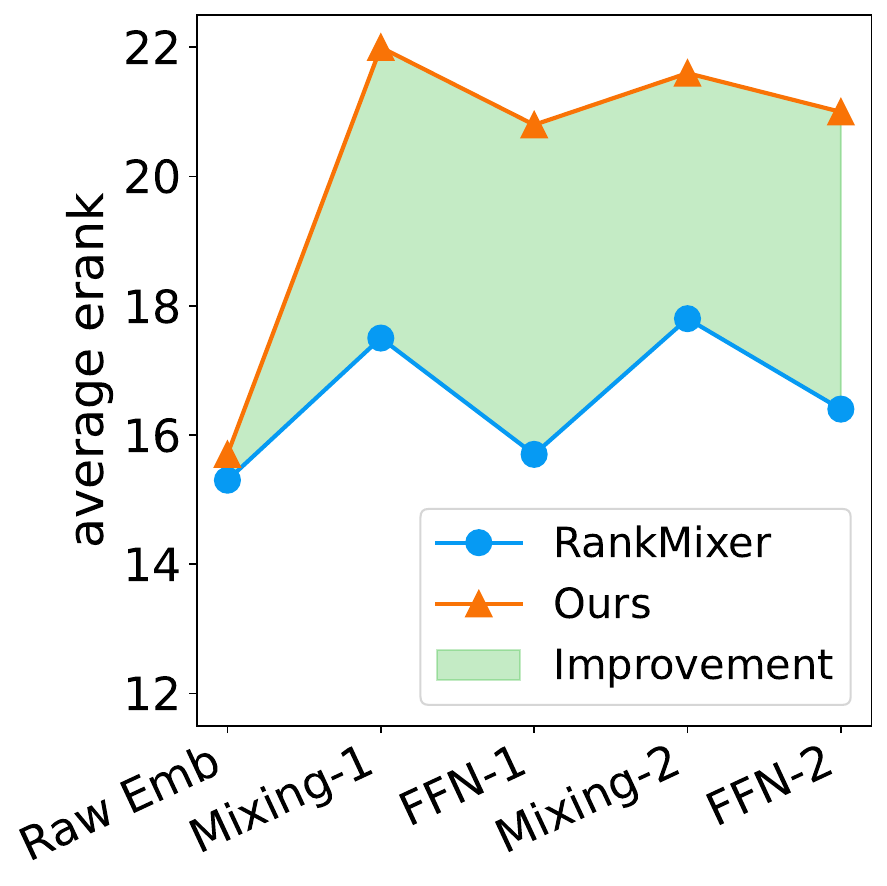}
        \caption{Comparison on Criteo.}
        \label{fig:avg-erank-criteo}
    \end{subfigure}
    \begin{subfigure}{0.495\linewidth}
        \centering
        \includegraphics[width=1.02\linewidth]{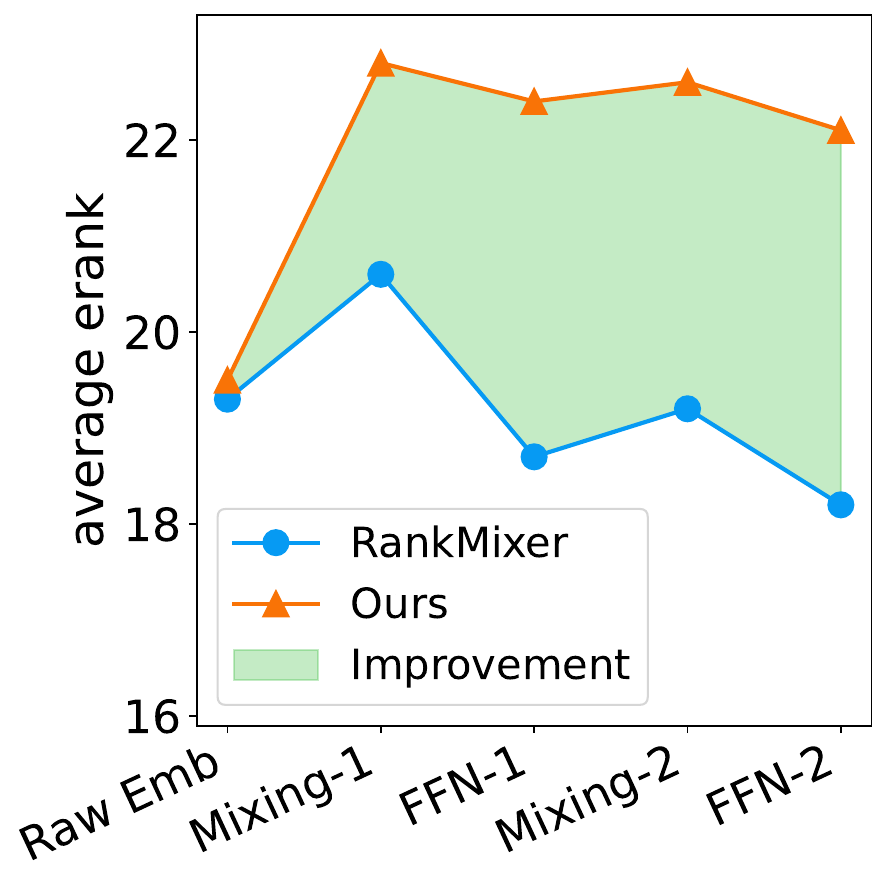}
        \caption{Comparison on Avazu.}
        \label{fig:avg-erank-avazu}
    \end{subfigure}
    %
  \caption{Layer-wise comparison of average effective rank, from raw embeddings to module outputs, over test samples for \unimixer and \rankmixer. Results align with Figure~\ref{fig:avg-erank-rankmixer}.}
  \label{fig:avg-erank}
\end{figure}

\subsubsection*{\bf Module contribution analysis.} We further conduct ablation studies to evaluate the contribution of the proposed modules in \unimixer, together with corresponding ablations applied to \rankmixer. 
As shown in Table~\ref{tab:ablation}, both Parameterized Full Mixing and the GLU-improved P-FFN contribute significantly to the performance of \unimixer, and removing either component leads to consistent performance degradation. 
Moreover, the two modules exhibit a clear collaborative effect: combining them yields substantially larger improvements than using either module alone. 
A similar modification applied to \rankmixer (e.g., replacing the FFN with a GLU-style activation) results in only marginal performance gains, suggesting that the benefits of the GLU-improved P-FFN depend on the enhanced token-mixing capability introduced by parameterized full mixing. 
These observations are consistent with our theoretical analysis, which highlights the complementary roles of the two modules in mitigating embedding collapse.

\begin{figure*}[!t]
  \centering
    \begin{subfigure}{0.495\textwidth}
        \centering
        \includegraphics[width=\linewidth]{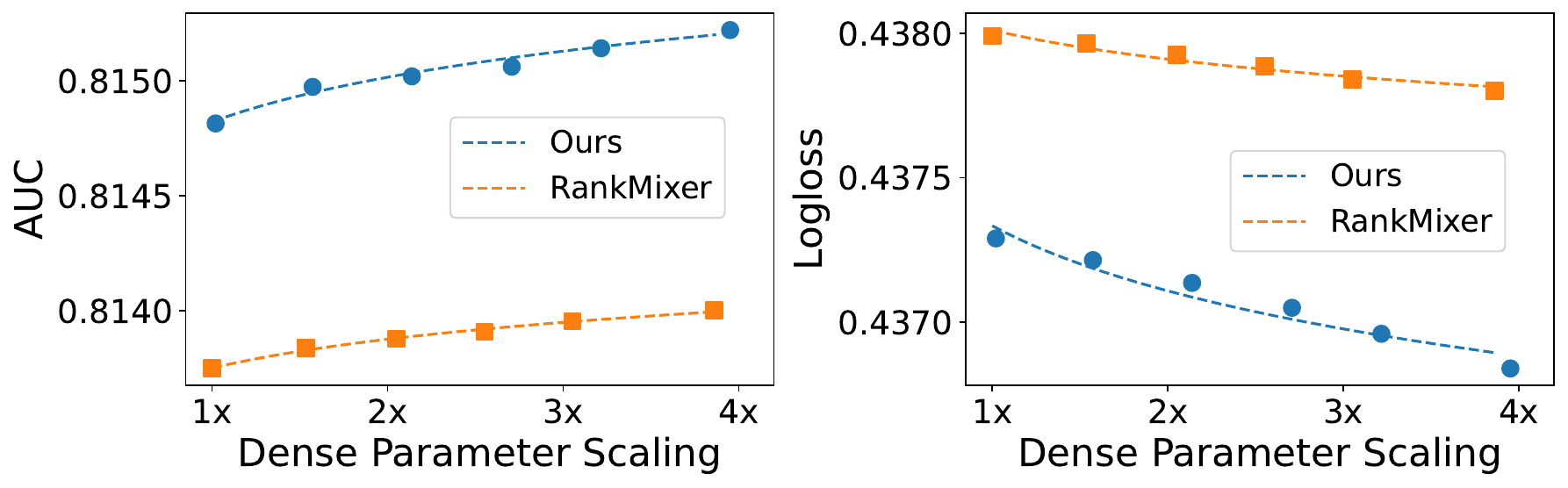}
        \caption{Width-wise dense parameter scaling on Criteo.}
        \label{fig:w-scaling-law-criteo}
    \end{subfigure}
    \begin{subfigure}{0.495\textwidth}
        \centering
        \includegraphics[width=\linewidth]{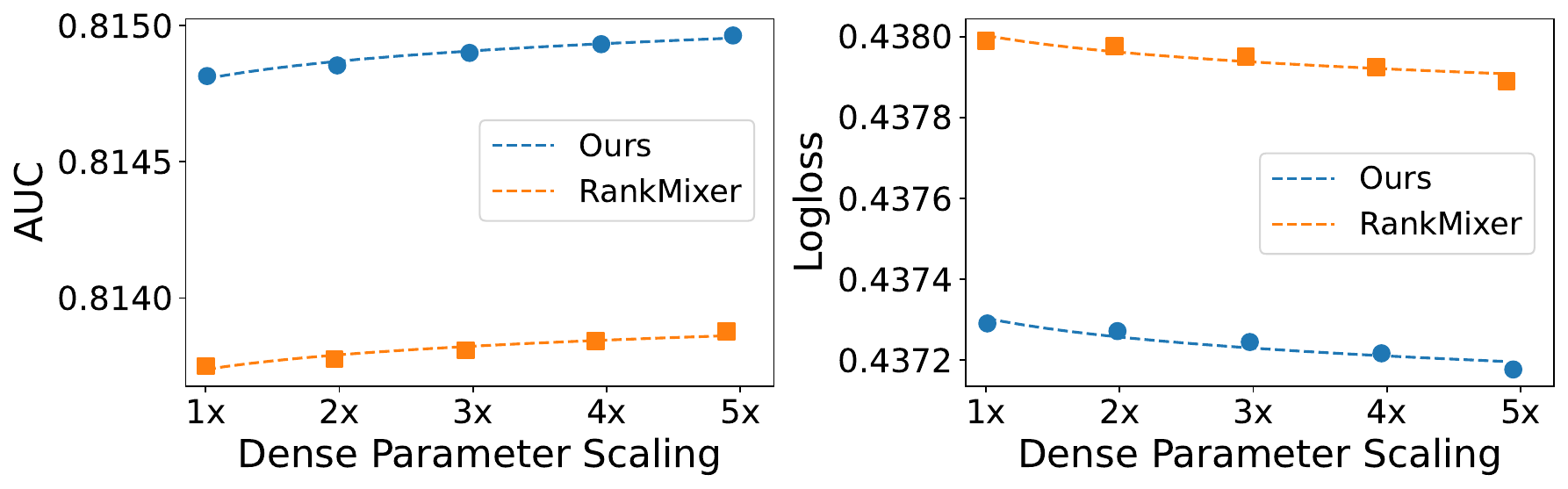}
        \caption{Depth-wise dense parameter scaling on Criteo.}
        \label{fig:d-scaling-law-criteo}
    \end{subfigure}
    \\ 
    \begin{subfigure}{0.495\textwidth}
        \centering
        \includegraphics[width=\linewidth]{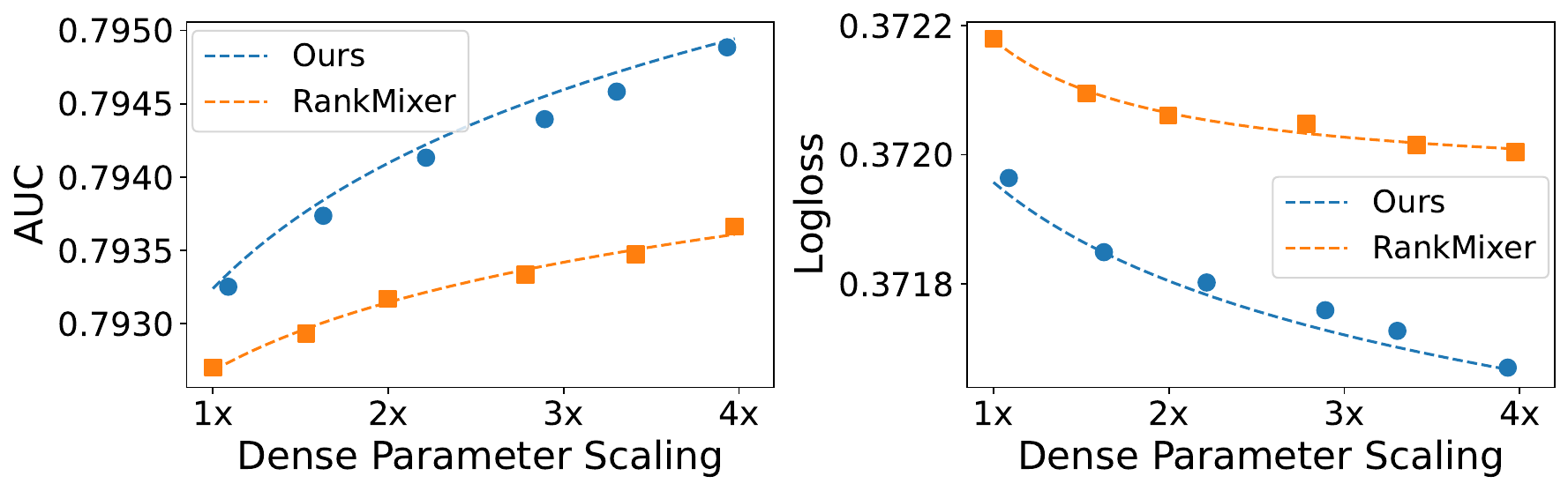}
        \caption{Width-wise dense parameter scaling on Avazu.}
        \label{fig:w-scaling-law-avazu}
    \end{subfigure}
    \begin{subfigure}{0.495\textwidth}
        \centering
        \includegraphics[width=\linewidth]{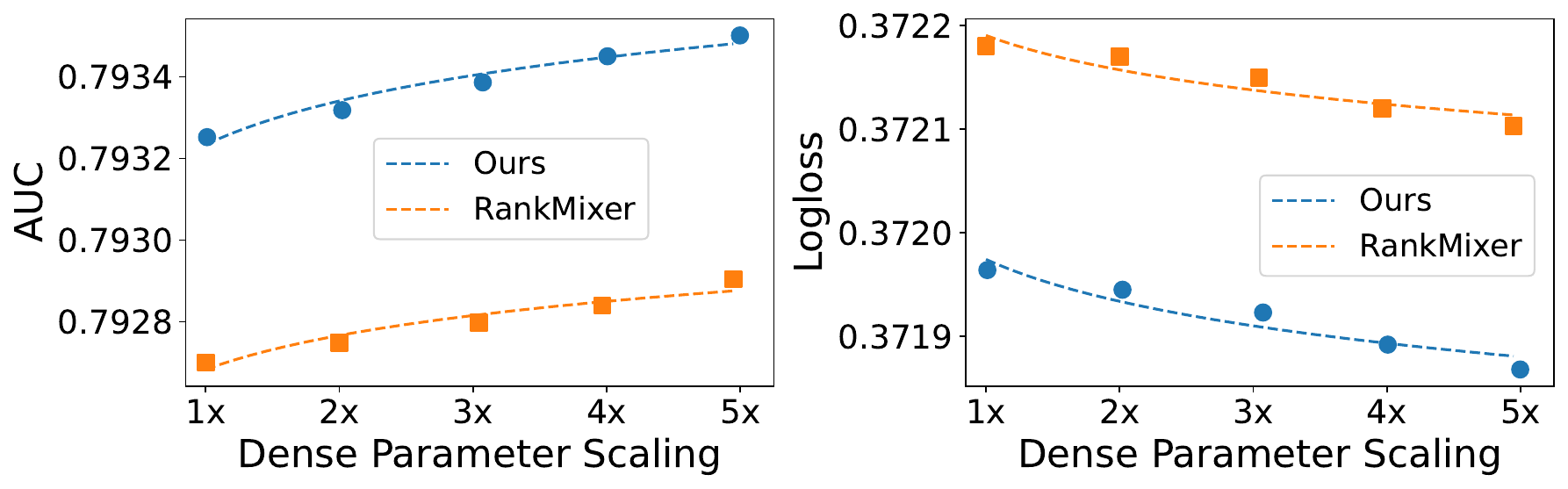}
        \caption{Depth-wise dense parameter scaling on Avazu.}
        \label{fig:d-scaling-law-avazu}
    \end{subfigure}
    %
  \caption{Dense parameter scaling trends under width (left) and depth (right) scaling for \rankmixer and \unimixer. The x-axis shows the parameter scaling factor relative to the base model. Markers denote empirical measurements, while dashed lines indicate fitted scaling curves capturing the performance–parameter scaling relationship.}
  \label{fig:scaling-appendix}
\end{figure*}

\subsubsection*{\bf Efficiency comparison.} We evaluate the efficiency of \unimixer against \rankmixer in terms of \textit{training time per epoch} and \textit{GPU memory usage}, with results shown in Figure~\ref{fig:efficiency}. 
\unimixer introduces a modest 10\%–15\% increase in runtime compared to \rankmixer, while maintaining similar GPU memory requirements and remaining competitive with other efficient baselines such as DCNv2. 
This aligns with the complexity analysis presented in Section~\ref{sec:method:complexity}, indicating that the additional computational cost is practically negligible for large-scale recommendation. 
Overall, \unimixer achieves a favorable tradeoff, offering superior recommendation performance with only minor efficiency overhead.


\subsection{RQ2: Embedding Collapse Mitigation}
\label{sec:exps:exps_collapse}
We now examine how the effective rank evolves in \unimixer in comparison with \rankmixer. 
Specifically, we analyze both \textbf{(\romannumeral1)} the \textit{effective-rank distribution shift} and \textbf{(\romannumeral2)} the \textit{average effective rank across representation stages}, providing a comprehensive view of representation diversity throughout the models.

\subsubsection*{\bf Effective-rank distribution shift.} As shown in Figure~\ref{fig:erank-main}, we visualize per-sample effective-rank distributions from raw embeddings to successive module outputs on Criteo (top) and Avazu (bottom). 
Compared with the corresponding results of \rankmixer in Figure~\ref{fig:erank-rankmixer}, \unimixer exhibits substantially larger effective-rank distribution shifts across internal representations. 
While both models start with nearly identical effective-rank distributions at the raw embedding stage, their behaviors diverge as representations propagate through the network. 
In particular, the parameterized full mixing in \unimixer produces noticeably larger effective-rank gains than the block-transposed mixing in \rankmixer, and the GLU-enhanced P-FFN maintains stronger rank preservation compared with the P-FFN used in \rankmixer.

\subsubsection*{\bf ``Expand more, shrink less''.} Figure~\ref{fig:avg-erank} shows the layer-wise average effective rank for both models, corresponding to Figure~\ref{fig:avg-erank-rankmixer}. 
We observe that the spectral dynamics of \unimixer follow the same characteristic alternating behavior previously identified in \rankmixer: token mixing layers tend to expand the representation spectrum and increase effective rank, whereas P-FFN layers introduce rank contraction. 
However, compared with \rankmixer, \unimixer consistently exhibits noticeably stronger expansion effects together with substantially milder contraction after each P-FFN stage. 
This behavior leads to more stable spectral evolution across layers, allowing \unimixer to consistently maintain higher average effective rank than \rankmixer across datasets and representation stages. 
Notably, on Avazu — where \rankmixer exhibits clear collapse tendencies — \unimixer preserves steady effective-rank growth throughout the network, indicating improved robustness against representation collapse.

Overall, these results confirm that \unimixer mitigates collapse more effectively than \rankmixer, producing more stable and expressive internal representations across layers.


\subsection{RQ3: Dense Parameter Scaling Analysis}
\label{sec:exps:exps_scaling}
To evaluate the scaling behavior of \unimixer as a scalable deep recommender paradigm, we conduct scaling-law experiments on its dense parameters, including those in Parameterized Full Mixing and GLU-improved P-FFNs. 
We vary \textbf{(\romannumeral1)} \textit{depth} (number of blocks) and \textbf{(\romannumeral2)} \textit{width} (hidden size of FFNs), and examine both individual and joint scaling schemes, with comparisons to \rankmixer.

Figure~\ref{fig:scaling-appendix} shows the results of width-wise (left) and depth-wise (right) scaling, while Figure~\ref{fig:scaling} presents the joint depth and width scaling results. 
Across all configurations, \unimixer exhibits substantially better scaling behavior than \rankmixer, with larger AUC improvements and lower LogLoss as model parameters increase. 
These results indicate that \unimixer can serve as a more effective paradigm for deep recommender systems, benefiting from its theoretically grounded design that mitigates embedding collapse while enhancing downstream performance.

Furthermore, comparing joint scaling (Figure~\ref{fig:scaling}) with individual depth- or width-scaling (Figure~\ref{fig:scaling-appendix}), we observe that increasing both depth and width consistently yields larger performance gains than scaling a single dimension. 
This behavior aligns with findings in language models~\cite{scaling-law} and prior observations in \rankmixer~\cite{rankmixer}, demonstrating the consistent but more significant scaling properties of \unimixer.

\begin{figure}[!t]
  \centering
    \begin{subfigure}{\linewidth}
        \centering
        \includegraphics[width=1.01\linewidth]{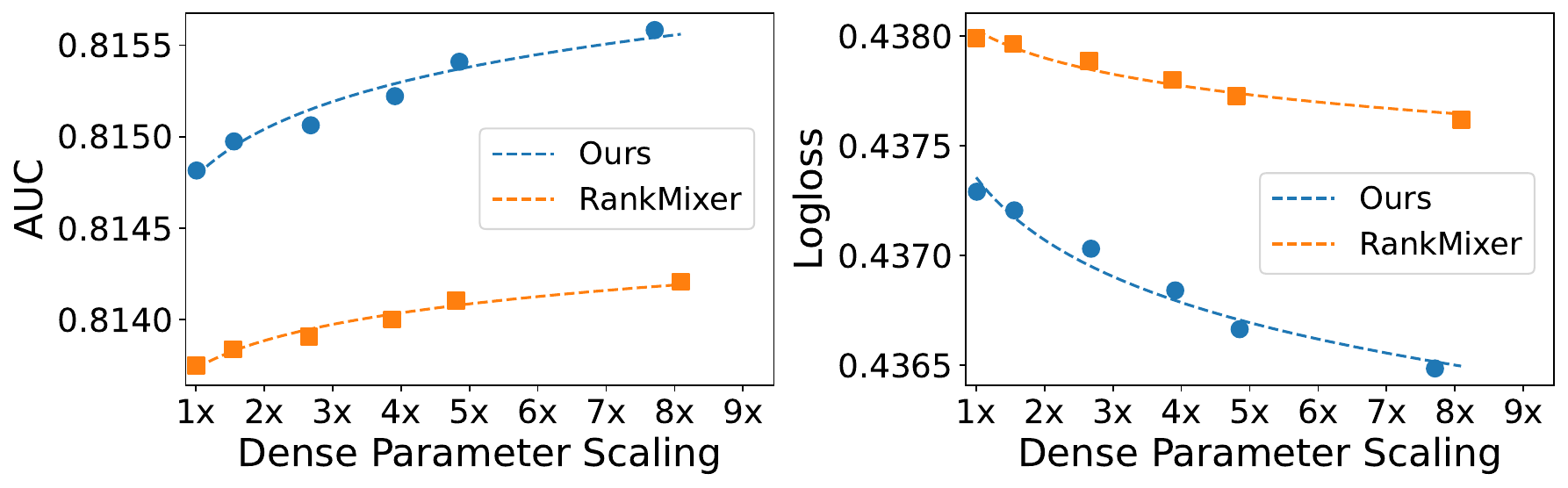}
        \caption{Joint width and depth dense parameter scaling on Criteo.}
        \label{fig:scaling-law-criteo}
    \end{subfigure}
    \\ 
    \begin{subfigure}{\linewidth}
        \centering
        \includegraphics[width=1.01\linewidth]{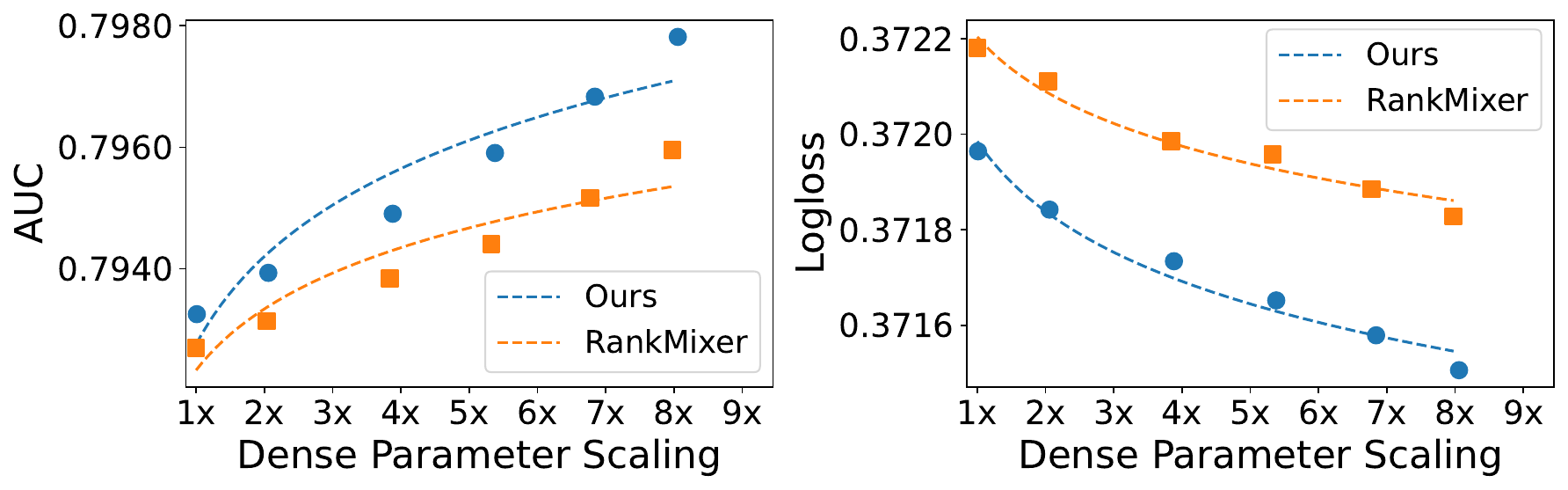}
        \caption{Joint depth and width dense parameter scaling on Avazu.}
        \label{fig:scaling-law-avazu}
    \end{subfigure}
    %
  \caption{Dense parameter scaling trends under joint width and depth scaling for \rankmixer and \unimixer.}
  \label{fig:scaling}
\end{figure}


\subsection{RQ4: Behavior Sequence Modeling Generalization}
\label{sec:exps:more-exp}

To further evaluate the generalization capability of \unimixer beyond standard CTR prediction, we additionally assess our method on \textit{behavior sequence modeling} tasks. 
Specifically, we conduct experiments on two datasets from the FuxiCTR benchmark: KuaiVideo (K), which predicts users' click probabilities on short videos, and TaobaoAd (T), which models user shopping behaviors. 
Following the benchmark protocol, we report both AUC and Group-AUC (gAUC), and compare \unimixer against several strong baselines, including \rankmixer.

As shown in Table~\ref{tab:more-exp}, \unimixer consistently achieves the best performance across both datasets and evaluation metrics, demonstrating clear advantages over all competing methods. 
Moreover, we observe similar spectral dynamics improvements to those shown in Figure~\ref{fig:avg-erank}, suggesting that both our analytical framework and the proposed unified mixing design generalize effectively beyond standard CTR estimation tasks. 
These results highlight the potential of our approach to broader recommendation scenarios.

\begin{table}[!t]
\centering
\caption{Comparison between \unimixer and \rankmixer on behavior sequence modeling tasks. (K) and (T) denote results on KuaiVideo and TaobaoAd datasets, respectively.}
\label{tab:more-exp}
\resizebox{\linewidth}{!}{
\begin{tabular}{ccc|cc}
\toprule
Model     & gAUC (K) & AUC (K)  & gAUC (T) & AUC (T)  \\ \midrule
AutoInt   & 0.6667 & 0.7469 & 0.5744 & 0.6486 \\
DCNv2     & 0.6675 & 0.7470 & 0.5749 & 0.6495 \\
xDeepFM     & 0.6696 & 0.7471 & 0.5729 & 0.6393 \\
\rankmixer & 0.6691 & 0.7482 & 0.5763 & 0.6508 \\ \midrule
\unimixer      & \textbf{0.6731} & \textbf{0.7514} & \textbf{0.5778} & \textbf{0.6522} \\
\bottomrule
\end{tabular}}
\end{table}
\section{Conclusion}
\label{sec:conclusion}

\subsubsection*{\bf Paper summary.} In this paper, we analyzed \rankmixer from the perspective of embedding collapse and showed that, although it provides modest improvements in effective rank compared to conventional recommenders, these gains are typically insufficient to prevent collapse, highlighting limitations in its core token-mixing and per-token FFN modules. 
Motivated by these insights, we proposed \unimixer, a novel deep recommender that enhances \rankmixer with (\romannumeral1) Parameterized Full Mixing and (\romannumeral2) GLU-improved per-token FFNs, which provably enable more expressive representation transformations and produce spectrum-robust token embeddings. 
Extensive experiments on industrial-scale benchmarks demonstrate that \unimixer consistently achieves superior performance over strong baselines, including \rankmixer, while maintaining higher effective rank and more expressive, non-collapsed representations. 
Moreover, scaling-law analyses confirm that \unimixer exhibits better scaling behavior, highlighting its potential as a robust and scalable paradigm for large-scale recommendation systems.

\subsubsection*{\bf Limitations and future work.} The effective-rank dynamics identified in this work — particularly the damped oscillatory trajectory observed in token-transformation-based recommenders such as \rankmixer and \unimixer — are inherently architecture-dependent, and our theoretical analysis focuses on this class of models. 
Nevertheless, the strong empirical performance of \rankmixer-style architectures suggests substantial room for further exploration. 
Our study provides a spectral perspective on representation collapse in deep recommenders and points to promising directions for future work, including more expressive token-mixing mechanisms, spectrum-preserving nonlinear modules, and deeper investigation of spectral dynamics in this architecture.

\section*{Acknowledgments}

This work was supported by the Tencent Rhino-Bird Focused Research Project under Grant No. P00660.
We thank the anonymous reviewers for their valuable feedback and constructive suggestions. 
We also appreciate the open-source community and benchmark maintainers whose resources supported this research.

\bibliographystyle{ACM-Reference-Format}
\bibliography{references}

\appendix
\section{Justification of Theorem~\ref{thm:erank-bound-blocktranspose}}
\label{app:thm:erank-bound-blocktranspose}

In this section we justify Theorem~\ref{thm:erank-bound-blocktranspose}. For completeness, we restate the setup and result before presenting the derivations.

\subsubsection*{\bf Restated theorem.} 
Let $X\in\mathbb R^{T\times D}$ with $D=Td$ and $d\ge 1$. Partition $X$ into a $T\times T$ grid of row-blocks $X_{ij}\in\mathbb R^{1\times d}$. Define the block-transpose operator $\mathcal T$ by
\[
(\mathcal T(X))_{ij}=X_{ji},
\]
and let $Y=\mathcal T(X)$.
Denote
\[
\operatorname{rank}(X)=r,
\qquad
\operatorname{erank}(X)=k,
\qquad
\mu=\operatorname{erank}(Y).
\]
Assume Frobenius-orthogonality and spectral incoherence:
\[
\langle X,Y\rangle_F \approx 0,
\qquad
\|X+Y\|_2^2 \approx \max\{\|X\|_2^2,\|Y\|_2^2\}.
\]
Let $M=X+Y$. The theorem states
\[
\frac{2k\mu}{(\sqrt{k}+\sqrt{\mu})^2}
\le
\operatorname{erank}(M)
\le
2(k+\mu),
\qquad
\mu \le \operatorname{rank}(Y)\le \min\{T,rd\}.
\]

We prove the result in two steps: first bounding the algebraic rank of $Y=\mathcal T(X)$, and then analyzing the effective rank of $M=X+\mathcal T(X)$.

\subsection{Rank properties of the block-transpose operator}

We begin with a rank characterization of the block-transpose mapping.

\subsubsection*{\bf Setup.}
Write a rank-$r$ decomposition
\[
X=\sum_{\ell=1}^r \sigma_\ell\,\mathbf u_\ell \mathbf v_\ell^\top,
\]
where $\mathbf u_\ell\in\mathbb R^T$ and $\mathbf v_\ell\in\mathbb R^{Td}$. Partition each $\mathbf v_\ell$ into $T$ contiguous segments $\mathbf v_\ell^{(j)}\in\mathbb R^{1\times d}$.

\subsubsection*{\bf Rank-1 case.}
Let $X=\mathbf u\mathbf v^\top$. Then $X_{ij}=u_i\,\mathbf v^{(j)}$ and
\[
Y_{ij}=X_{ji}=u_j\,\mathbf v^{(i)}.
\]
For the $j$-th block-column of $Y$,
\[
C_j
=
\begin{bmatrix}
Y_{1j}\\ \vdots\\ Y_{Tj}
\end{bmatrix}
=
u_j
\begin{bmatrix}
\mathbf v^{(1)}\\ \vdots\\ \mathbf v^{(T)}
\end{bmatrix}.
\]
Define
\[
V_{\mathrm{stack}}
=
\begin{bmatrix}
\mathbf v^{(1)}\\ \vdots\\ \mathbf v^{(T)}
\end{bmatrix}
\in\mathbb R^{T\times d}.
\]
Then $C_j=u_j V_{\mathrm{stack}}$, so all block-columns of $Y$ lie in the column space of $V_{\mathrm{stack}}$. Hence
\[
\operatorname{rank}(Y)
=
\operatorname{rank}(V_{\mathrm{stack}})
\le \min\{T,d\}.
\]

\subsubsection*{\bf General rank-$r$ case.}
By linearity of $\mathcal T$,
\[
Y=\sum_{\ell=1}^r
\sigma_\ell\,\mathcal T(\mathbf u_\ell\mathbf v_\ell^\top).
\]
Each term contributes at most $d$ independent directions in $\mathbb R^T$, so the column space of $Y$ is contained in a subspace of dimension at most $rd$. Since $Y\in\mathbb R^{T\times Td}$, this yields
\[
\operatorname{rank}(Y)\le \min\{T,rd\}.
\]

\subsection{Effective rank of the initialized sum}

We now analyze the effective rank of
\[
M=X+\mathcal T(X)=X+Y.
\]

\subsubsection*{\bf Spectral incoherence assumption.}
We assume the initialization produces approximate Frobenius orthogonality, $\langle X,Y\rangle_F\approx 0$, and incoherent dominant singular directions,
\[
\|X+Y\|_2^2 \approx
\max\{\|X\|_2^2,\|Y\|_2^2\}.
\]
Since $\mathcal T$ permutes entries,
\[
\|Y\|_F=\|X\|_F.
\]

\subsubsection*{\bf Lower bound.}
Frobenius orthogonality implies
\[
\|M\|_F^2
\approx
\|X\|_F^2+\|Y\|_F^2
\approx
2\|X\|_F^2.
\]
Using $\|M\|_2\le\|X\|_2+\|Y\|_2$ and
\[
\|X\|_2=\frac{\|X\|_F}{\sqrt{k}},
\qquad
\|Y\|_2=\frac{\|X\|_F}{\sqrt{\mu}},
\]
we obtain
\[
\operatorname{erank}(M)
=
\frac{\|M\|_F^2}{\|M\|_2^2}
\ge
\frac{2\|X\|_F^2}{
\left(
\frac{\|X\|_F}{\sqrt{k}}
+
\frac{\|X\|_F}{\sqrt{\mu}}
\right)^2}
=
\frac{2k\mu}{(\sqrt{k}+\sqrt{\mu})^2}.
\]

\subsubsection*{\bf Upper bound.}
Under spectral incoherence,
\[
\|M\|_2^2
\approx
\max\{\|X\|_2^2,\|Y\|_2^2\},
\quad
\|M\|_F^2
\approx
\|X\|_F^2+\|Y\|_F^2.
\]
Therefore,
\[
\operatorname{erank}(M)
\approx
\frac{\|X\|_F^2+\|Y\|_F^2}{
\max\{\|X\|_2^2,\|Y\|_2^2\}}
\le
\frac{\|X\|_F^2}{\|X\|_2^2}
+
\frac{\|Y\|_F^2}{\|Y\|_2^2}
=
k+\mu.
\]
Allowing constant slack from the approximation yields
\[
\operatorname{erank}(M)\le 2(k+\mu).
\]

Combining the algebraic-rank bound for $Y$ with the effective-rank bounds above completes the justification of Theorem~\ref{thm:erank-bound-blocktranspose}.

\section{Justification of Theorem~\ref{thm:ffn_failure}}
\label{app:thm:ffn_failure}

The goal of this section is to give a compact, self-contained justification of the two claims that comprise Theorem~\ref{thm:ffn_failure}. Informally, the first claim is an \textbf{algebraic} limitation that follows from positive homogeneity of the activation; the second claim is a \textbf{probabilistic} contraction of effective rank that occurs under a measurable response-gap between principal and tail directions. Below we state the result and the required probabilistic assumption, and then present the two proofs as subsections. Each proof is explicit and includes the necessary concentration arguments.

\subsubsection*{\bf Restated theorem.}
Let \(X\in\mathbb R^{T\times D}\) satisfy \(\operatorname{erank}(X)=k\) with \(\tfrac{k}{D}\le\gamma\) for some fixed \(\gamma\in(0,1)\). Consider the per-row map \(\mathcal F(X)=\phi(XA)B\) where \(A\in\mathbb R^{D\times m}\) and \(B\in\mathbb R^{m\times D}\) have i.i.d.\ sub-Gaussian entries with variance \(1/D\), and \(\phi\) is positively homogeneous of degree one. The two claims below are proved in full in the subsequent subsections:

\begin{itemize}[leftmargin=15pt,parsep=2pt,itemsep=2pt,topsep=2pt]
  \item Deterministic homogeneity barrier: when \(X\) has algebraic rank \(1\), the output rank is at most \(2\) (and equals \(1\) under a single-sign condition).
  \item Probabilistic contraction: under the Spectral-concentration \& Response-Gap condition and standard concentration assumptions on pre-activations and \(\phi\), the effective rank is contracted by a constant factor \(\alpha\in(0,1)\) with high probability.
\end{itemize}

\subsubsection*{\bf Assumption (Spectral concentration \& Response-Gap).}
Let \(H=XA\) and \(\Sigma_H=\tfrac{1}{T}H^\top H\) with eigenvalues \(\lambda_1\ge\lambda_2\ge\cdots\). Fix integers \(r\ge1\) and \(\eta\in(0,1]\) such that the top-\(r\) eigenvalues capture an \(\eta\)-fraction of total energy:
\[
\sum_{i=1}^r \lambda_i \ge \eta\,\operatorname{tr}(\Sigma_H).
\]
For a unit direction \(u\) define the directional response ratio
\[
\rho(u)\;=\;\frac{\mathbb E[\phi(\langle u,h\rangle)^2]}{\mathbb E[\langle u,h\rangle^2]}.
\]
Assume there exist disjoint unit-direction sets \(\mathcal U_{\mathrm{top}}\) and \(\mathcal U_{\mathrm{low}}\) and constants \(0\le c_{\mathrm{low}}<c_{\mathrm{top}}\) such that
\[
\inf_{u\in\mathcal U_{\mathrm{top}}}\rho(u)\ge c_{\mathrm{top}},\qquad
\sup_{u\in\mathcal U_{\mathrm{low}}}\rho(u)\le c_{\mathrm{low}}.
\]
This captures the empirical situation where the activation attenuates low-variance (tail) directions more than principal directions.

\subsection{Deterministic homogeneity barrier}
\label{app:homogeneity_proof}

We begin with the algebraic limitation: positive homogeneity prevents a per-row FFN from creating more than two linearly independent output directions from a rank-one input. This is an exact, elementary argument.

\begin{theorem}[Homogeneity Barrier — signed inputs]
\label{thm:homogeneity_signed_app}
Let \(X=\mathbf c\mathbf v^\top\in\mathbb R^{T\times D}\) (algebraic rank~1). If \(\phi\) is positively homogeneous of degree~1, then for any \(A,B\),
\[
\operatorname{rank}\big(\mathcal F(X)\big)\le 2,\qquad \operatorname{erank}\big(\mathcal F(X)\big)\le 2.
\]
If all entries of \(\mathbf c\) share the same sign then \(\operatorname{rank}(\mathcal F(X))=1\).
\end{theorem}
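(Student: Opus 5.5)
The plan is to exploit positive homogeneity row by row. Write the $t$-th row of $X$ as $c_t\mathbf v^\top$, so the $t$-th row of the pre-activation matrix is $H_t=c_t\,\mathbf v^\top A=c_t\,\mathbf w^\top$ with $\mathbf w^\top:=\mathbf v^\top A\in\mathbb R^{1\times m}$. Positive homogeneity of degree one means $\phi(\lambda z)=\lambda\phi(z)$ for $\lambda\ge 0$, applied entrywise.

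Split on the sign of $c_t$:
\begin{itemize}
\item if $c_t\ge 0$, then $\phi(c_t\mathbf w^\top)=c_t\,\phi(\mathbf w^\top)$;
\item if $c_t<0$, then $\phi(c_t\mathbf w^\top)=|c_t|\,\phi(-\mathbf w^\top)$.
\end{itemize}
Define $\mathbf c^{+}=\max(\mathbf c,0)$ and $\mathbf c^{-}=\max(-\mathbf c,0)$ entrywise. This gives the exact identity
\[
\phi(XA)=\mathbf c^{+}\,\phi(\mathbf w)^\top+\mathbf c^{-}\,\phi(-\mathbf w)^\top .
\]
Multiplying on the right by $B$ yields
\[
\mathcal F(X)=\mathbf c^{+}\big(B^\top\phi(\mathbf w)\big)^\top+\mathbf c^{-}\big(B^\top\phi(-\mathbf w)\big)^\top,
\]
a sum of at most two rank-one matrices. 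Hence $\operatorname{rank}(\mathcal F(X))\le 2$, and this holds for arbitrary $A,B$; no probability enters.

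The effective-rank bound follows from the elementary inequality $\operatorname{erank}(Z)=\sum_i\sigma_i^2/\sigma_1^2\le\operatorname{rank}(Z)$, since each nonzero $\sigma_i^2\le\sigma_1^2$. Applying it with $Z=\mathcal F(X)$ gives $\operatorname{erank}(\mathcal F(X))\le 2$. If $\mathcal F(X)=0$ the effective rank is undefined, or zero by convention, and I would note this degenerate case separately.

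For the single-sign claim:
\begin{itemize}
\item if every $c_t\ge 0$, then $\mathbf c^{-}=0$ and $\mathcal F(X)=\mathbf c\,(B^\top\phi(\mathbf w))^\top$, which has rank $1$;
\item if every $c_t\le 0$, then $\mathbf c^{+}=0$ and $\mathcal F(X)=-\mathbf c\,(B^\top\phi(-\mathbf w))^\top$, which also has rank $1$.
\end{itemize}
The statement's ``$=1$'' should be read as ``$\le 1$'', with equality exactly when the corresponding vector $B^\top\phi(\pm\mathbf w)$ is nonzero. I would add a remark that this holds generically, e.g.\ almost surely for the random $A,B$ of Theorem~\ref{thm:ffn_failure} whenever $\phi\not\equiv 0$ on the relevant half-line and $\mathbf c\neq 0$. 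The rank-one equality also gives $\operatorname{erank}=1$.

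There is no real obstacle here. The only delicate point is precision about degeneracies: zero entries of $\mathbf c$ (they fall in either part harmlessly), the zero-output case, and stating the equality in the single-sign case as ``rank at most one, with equality generically'' rather than unconditionally.
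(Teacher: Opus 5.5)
Your argument is correct and coincides with the paper's proof. It uses the same sign split into $\mathbf c^{+}$ and $\mathbf c^{-}$, the same exact identity $\mathcal F(X)=\mathbf c^{+}\bigl(\phi(\mathbf w)^\top B\bigr)+\mathbf c^{-}\bigl(\phi(-\mathbf w)^\top B\bigr)$ with $\mathbf w=A^\top\mathbf v$, and reads off at most two rank-one terms. Two of your additions are valid refinements of points the paper glosses over: the explicit $\operatorname{erank}\le\operatorname{rank}$ step, and weakening the single-sign claim to ``rank at most one'' (with equality needing $\mathbf c\neq 0$ and $\phi(\pm\mathbf w)^\top B\neq 0$).

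The only slip is your side remark that equality holds almost surely. Take ReLU, the main positively homogeneous example, with $\mathbf c\ge 0$: then $\phi(\mathbf w)=0$ whenever all entries of $A^\top\mathbf v$ are nonpositive, which has probability $2^{-m}>0$ for Gaussian $A$. So rank one holds almost surely only when $\phi(1)$ and $\phi(-1)$ are both nonzero (e.g.\ leaky ReLU).
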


\subsubsection*{\bf Intuition.}
When every row of \(X\) is a scalar multiple of the same vector, each pre-activation is a scalar multiple of one fixed vector in \(\mathbb R^m\). Positive homogeneity maps these scaled pre-activations into at most two activation-vectors (one for positive scalar, one for negative), and linear readout produces at most two output directions.

\begin{proof}
Write \(X=\mathbf c\mathbf v^\top\). For each row \(i\),
\[
x_i = c_i \mathbf v^\top,\qquad
h_i = x_i A = c_i(\mathbf v^\top A) = c_i u,
\]
with \(u:=\mathbf v^\top A\in\mathbb R^m\). Positive homogeneity implies
\[
\phi(c_i u)=|c_i|\,\phi(\operatorname{sign}(c_i)u).
\]
Let
\[
w_+ := \phi(u)B,\qquad w_- := \phi(-u)B\in\mathbb R^D,
\]
and write \(c_i^+:=\max(c_i,0)\), \(c_i^-:=\max(-c_i,0)\). Then the \(i\)-th output row equals
\[
y_i = \phi(h_i)B = c_i^+ w_+ + c_i^- w_-.
\]
Stacking rows,
\[
\mathcal F(X)=\mathbf c^+ w_+^\top + \mathbf c^- w_-^\top,
\]
a sum of at most two rank-one outer products. Hence \(\operatorname{rank}(\mathcal F(X))\le 2\) and \(\operatorname{erank}(\mathcal F(X))\le 2\). If all entries of \(\mathbf c\) share the same sign, one of \(\mathbf c^\pm\) vanishes and the output reduces to rank~1.
\end{proof}

\subsection{Probabilistic contraction for general effective rank}
\label{app:prob_contraction_proof}

We now prove the contraction claim. The argument bounds Frobenius energy and top operator norm of the activated-and-readout matrix and combines these via the effective rank identity. To keep the presentation modular and readable we present the standard concentration steps explicitly within this subsection. To improve clarity and readability, we organize the proofs according to the outline below.

\begin{itemize}[leftmargin=15pt,parsep=2pt,itemsep=2pt,topsep=2pt]
  \item Show with high probability all pre-activations lie in a bounded interval \([-R,R]\).
  \item Use Lipschitzness of \(\phi\) on \([-R,R]\) and Hanson--Wright to concentrate \(\|\phi(H)\|_F^2\).
  \item Use Matrix Bernstein to concentrate \(\widehat\Sigma_\phi=\tfrac{1}{T}\phi(H)^\top\phi(H)\).
  \item Relate expectations to \(\Sigma_H\) via the Response-Gap to bound \(\mathbb E\|\phi(H)\|_F^2\) and \(\lambda_{\max}(\mathbb E\widehat\Sigma_\phi)\).
  \item Propagate bounds through the linear readout \(B\) and combine to produce \(\operatorname{erank}(\phi(H)B)\le\alpha\,\operatorname{erank}(X)\).
\end{itemize}

\begin{proof}
Let \(H=XA\in\mathbb R^{T\times m}\), \(Z:=\phi(H)\), and \(Y:=ZB=\phi(H)B\). Denote \(\Sigma_H=\tfrac{1}{T}H^\top H\) with eigenvalues \(\lambda_1\ge\lambda_2\ge\cdots\), and \(\widehat\Sigma_Z=\tfrac{1}{T}Z^\top Z\).

\subsubsection*{\bf (i) Bounded pre-activation window.}  
Assume each input row \(x\) satisfies \(\|x\|_2^2\le E_{\max}\). Since entries of \(A\) are i.i.d.\ sub-Gaussian with variance \(1/D\) and \(\psi_2\)-norm \(\le K/\sqrt D\), each coordinate \(h_{t,j}\) is sub-Gaussian with
\[
\|h_{t,j}\|_{\psi_2}\le C K\sqrt{E_{\max}/D}.
\]
By a union bound over the \(T m\) coordinates, for any \(\delta\in(0,1)\) with probability at least \(1-\delta\) all pre-activations lie in
\[
[-R,R],\qquad R=C'K\sqrt{\frac{E_{\max}\log(mT/\delta)}{D}},
\]
for absolute constants \(C,C'\). Fix \(\delta\) polynomially small and condition on this high-probability event.

\subsubsection*{\bf (ii) Frobenius concentration (Hanson–Wright).}  
Because \(\phi\) is \(L\)-Lipschitz on \([-R,R]\), each \(\phi(h_{t,j})\) is sub-Gaussian with parameter \(\tilde K=O(KL\sqrt{E_{\max}/D})\). Applying Hanson–Wright (or Bernstein-style bounds) to the quadratic sum gives: for any \(t\ge0\),
\[
\Pr\!\big(\big|\|Z\|_F^2-\mathbb E\|Z\|_F^2\big|\ge t\big)
\le 2\exp\!\Big(-c\min\Big\{\frac{t^2}{K^4L^4Tm},\;\frac{t}{K^2L^2}\Big\}\Big),
\]
for an absolute \(c>0\). Thus \(\|Z\|_F^2\) concentrates sharply around \(\mathbb E\|Z\|_F^2\).

\subsubsection*{\bf (iii) Spectral concentration (Matrix Bernstein).}  
Represent \(\widehat\Sigma_Z=\tfrac{1}{T}\sum_{t=1}^T z_t z_t^\top\) with rows \(z_t\). Each summand has operator-norm and variance proxy controlled by \(K^2L^2E_{\max}/D\). Matrix Bernstein yields: for any \(\tau>0\),
\[
\Pr\big(\|\widehat\Sigma_Z-\mathbb E\widehat\Sigma_Z\|_{\mathrm{op}}\ge\tau\big)
\le 2m\exp\!\Big(-c\frac{T\tau^2}{(K^2L^2E_{\max}/D)^2 r^2}\Big).
\]
Taking
\[
\tau=\Theta\!\Big(\frac{K^2L^2E_{\max}}{D}\sqrt{\frac{r^2\log m}{T}}\Big)
\]
gives \(\|\widehat\Sigma_Z-\mathbb E\widehat\Sigma_Z\|_{\mathrm{op}}\le\tau\) with failure probability \(O(\exp(-\Theta(r)))\) under the stated scaling.

\subsubsection*{\bf (iv) Expectations and the Response-Gap.}  
Let \(\{u_j\}_{j=1}^m\) be an orthonormal eigenbasis of \(\Sigma_H\). Then
\[
\mathbb E\|Z\|_F^2
= T\sum_{j=1}^m \rho(u_j)\,\lambda_j.
\]
Splitting into top (\(j\le r\)) and tail (\(j>r\)) components and applying the Response-Gap bounds yields
\begin{align}
\mathbb E\|Z\|_F^2
&\le T\big(c_{\mathrm{top}}\sum_{j=1}^r\lambda_j + c_{\mathrm{low}}\sum_{j=r+1}^m\lambda_j\big) \notag\\
&= T\big(c_{\mathrm{top}}\eta + c_{\mathrm{low}}(1-\eta)\big)\operatorname{tr}(\Sigma_H) \notag\\
&=: T M_F\operatorname{tr}(\Sigma_H).  \\
\end{align}

Moreover,
\[
\lambda_{\max}(\mathbb E\widehat\Sigma_Z)\ge c_{\mathrm{top}}\lambda_1(\Sigma_H).
\]

\subsubsection*{\bf (v) Effect of linear readout \(B\).}  
Let \(Y=ZB\). Since \(B\) has i.i.d.\ entries with variance \(1/D\), standard moment computations show that application of \(B\) scales Frobenius and operator norms by multiplicative factors \(\kappa_F,\kappa_{\mathrm{op}}>0\) (which are \(1\pm o(1)\) under the random matrix scaling regime). After applying \(B\) the concentration and expectation bounds from (ii)--(iv) transfer to \(\|Y\|_F^2\) and \(\|Y\|_2^2\) up to these multiplicative corrections and small additive fluctuations.

\subsubsection*{\bf (vi) Combine to bound effective rank.}  
Using \(\operatorname{erank}(Y)=\|Y\|_F^2/\|Y\|_2^2\), the preceding bounds imply that with probability at least \(1-O(\exp(-\Theta(r)))\),
\[
\|Y\|_F^2 \le \kappa_F\cdot T M_F\operatorname{tr}(\Sigma_H) + \delta_F,
\qquad
\|Y\|_2^2 \ge \kappa_{\mathrm{op}}\cdot\big(c_{\mathrm{top}}\lambda_1(\Sigma_H)-\tau\big) - \delta_{\mathrm{op}},
\]
where \(\tau,\delta_F,\delta_{\mathrm{op}}\) are small concentration errors from (ii),(iii) and the randomness of \(B\). Hence
\[
\operatorname{erank}(Y)
\le
\frac{\kappa_F M_F + o(1)}{\kappa_{\mathrm{op}}(c_{\mathrm{top}}-o(1))}\cdot
\frac{\operatorname{tr}(\Sigma_H)}{\lambda_1(\Sigma_H)}.
\]
Defining
\[
\alpha := \frac{\kappa_F M_F + o(1)}{\kappa_{\mathrm{op}}(c_{\mathrm{top}}-o(1))},
\]
and noting \(M_F=c_{\mathrm{top}}\eta + c_{\mathrm{low}}(1-\eta) < c_{\mathrm{top}}\) because \(c_{\mathrm{low}}<c_{\mathrm{top}}\) and \(\eta\in(0,1]\), we conclude that \(\alpha\in(0,1)\) for sufficiently small concentration errors. Therefore, with probability at least \(1-O(\exp(-\Theta(r)))\),
\[
\operatorname{erank}(Y)\le \alpha\,\operatorname{erank}(X),
\]
which establishes the probabilistic contraction claim.
\end{proof}

\section{Justification of Theorem~\ref{thm:parameterized-block-mixing}}
\label{app:thm:parameterized-block-mixing}

In this appendix we justify Theorem~\ref{thm:parameterized-block-mixing}. We show that the block granularity parameter \(d^{\ast}\) controls a trade-off between computational efficiency and representational power: the fine-grained case \(d^{\ast}=1\) is strictly more expressive than any coarse-grained case \(d^{\ast}>1\), and this expressivity gap is particularly salient when the input has low effective rank.

\subsubsection*{\bf Restated theorem.} 
Let \(X\in\mathbb R^{T\times D}\) and \(\mathbf x=\operatorname{vec}(X)\in\mathbb R^N\) with \(N=TD\). Fix a divisor \(d^{\ast}\) of \(N\) and write \(K=N/d^{\ast}\).  Partition \(\mathbf x\) into \(K\) contiguous blocks \(\mathcal B=\{\mathbf b_1,\dots,\mathbf b_K\}\) with \(\mathbf b_k\in\mathbb R^{d^{\ast}}\).  A parameterized block-mixing layer with weight matrix \(\mathbf W\in\mathbb R^{K\times K}\) produces
\[
\mathbf y = (\mathbf W \otimes \mathbf I_{d^{\ast}})\,\mathbf x,
\]
and the residual output is
\[
\mathbf C = \Phi_{d^{\ast}}(X;\mathbf W)
\triangleq X + \operatorname{reshape}\bigl((\mathbf W\otimes\mathbf I_{d^{\ast}})\operatorname{vec}(X)\bigr).
\]
When \(d^{\ast}=1\) this reduces to a full learned linear mixing on coordinates.

\subsection{Expressivity gap theorem}

We formalize the expressivity limitation of coarse blocking.

\begin{theorem}[Fine-grained expressivity and subspace constraint]
\label{thm:expressivity}
Define the reachable set
\[
\mathcal R_{d^{\ast}}(X)
=\bigl\{\,C\in\mathbb R^{T\times D}\mid \exists\,\mathbf W,\ 
C=\Phi_{d^{\ast}}(X;\mathbf W)\,\bigr\}.
\]
Assume \(X\neq 0\). Then:
\begin{itemize}[leftmargin=15pt,parsep=2pt,itemsep=2pt,topsep=2pt]
  \item[(i)] (\textbf{Universal reachability at \(d^{\ast}=1\)}) \(\mathcal R_1(X)=\mathbb R^{T\times D}\).
  \item[(ii)] (\textbf{Strict deficiency at \(d^{\ast}>1\)}) For every \(d^{\ast}>1\), \(\mathcal R_{d^{\ast}}(X)\subsetneq\mathcal R_1(X)\).
  \item[(iii)] (\textbf{Dependence on effective rank}) Let \(V_{\mathrm{blocks}}=\operatorname{span}\{\mathbf b_1,\dots,\mathbf b_K\}\subset\mathbb R^{d^{\ast}}\). All perturbations produced by \(\Phi_{d^{\ast}}\) have each block lying in \(V_{\mathrm{blocks}}\). If \(\operatorname{erank}(X)\ll\min\{T,D\}\), then the \(\mathbf b_k\) are approximately confined to a low-dimensional subspace of \(\mathbb R^{d^{\ast}}\), so there exist fine-grained variations (directions orthogonal to \(V_{\mathrm{blocks}}\)) that \(\Phi_{d^{\ast}}\) cannot express.
\end{itemize}
\end{theorem}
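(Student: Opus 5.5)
The plan is to reduce everything to one reshaping identity. Let $\mathbf B_X\in\mathbb R^{K\times d^{\ast}}$ be the matrix whose $k$-th row is $\mathbf b_k^\top$, so that $\mathbf x$ is the row-wise vectorization of $\mathbf B_X$. Then
\[
(\mathbf W\otimes\mathbf I_{d^{\ast}})\mathbf x \;\longleftrightarrow\; \mathbf W\mathbf B_X ,
\]
meaning the $k$-th output block is $\sum_j W_{kj}\mathbf b_j$. All three claims become statements about the set $\{\mathbf W\mathbf B_X\}$, which is exactly the set of $K\times d^{\ast}$ matrices whose rows lie in the row space $V_{\mathrm{blocks}}$ of $\mathbf B_X$.

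\textbf{Step (i).} Here $d^{\ast}=1$, $K=N$, and $\mathbf W\in\mathbb R^{N\times N}$ is unconstrained. Given any target $C$, set
\[
\mathbf W=\frac{(\operatorname{vec}(C)-\mathbf x)\,\mathbf x^\top}{\|\mathbf x\|_2^2},
\]
which is well defined because $X\neq0$. Then $\mathbf x+\mathbf W\mathbf x=\operatorname{vec}(C)$, so $\mathcal R_1(X)=\mathbb R^{T\times D}$.

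\textbf{Step (iii).} By the identity, every perturbation produced by $\Phi_{d^{\ast}}$ has rows in $V_{\mathrm{blocks}}$. This gives
\[
\mathcal R_{d^{\ast}}(X)=\mathbf x+\{\text{vectors whose blocks lie in }V_{\mathrm{blocks}}\},
\]
an affine subspace of dimension $K\cdot\dim V_{\mathrm{blocks}}$.

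To link this to $\operatorname{erank}(X)$, I would take the case $d^{\ast}\mid D$, so that each block is a segment of a row of $X$. Write $X=\sum_{\ell\le r}\sigma_\ell\mathbf u_\ell\mathbf v_\ell^\top$. Then every block is a combination of the segments $\mathbf v_\ell^{(s)}$, so $V_{\mathrm{blocks}}$ lies in their span. This is the same segment argument used for $\mathcal T$ in Appendix~\ref{app:thm:erank-bound-blocktranspose}.

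For the approximate version I would split the decomposition into a dominant part and a tail. Let $P$ be the projector onto the span of the dominant-part segments, and write $P_\perp=I-P$. Then
\[
\|\mathbf W\mathbf B_X P_\perp\|_F\le\|\mathbf W\|_2\,\|\mathbf B_X P_\perp\|_F ,
\]
and the right-hand side is controlled by the tail energy $\sum_{\ell>k'}\sigma_\ell^2$. Consequently, any target whose blocks have substantial components along $P_\perp$ cannot be reached with bounded $\|\mathbf W\|$.

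\textbf{Step (ii) and the main obstacle.} The obstacle is making (ii) correct as stated. Pointwise, when $K\ge d^{\ast}$ and the blocks of $X$ span all of $\mathbb R^{d^{\ast}}$, the affine set in Step (iii) is all of $\mathbb R^N$. So $\mathcal R_{d^{\ast}}(X)=\mathcal R_1(X)$ for such generic $X$, and strict deficiency for every $X\neq0$ is false.

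I would therefore prove (ii) in two forms.
\begin{itemize}
\item \emph{Operator level.} This is the form stated in Theorem~\ref{thm:parameterized-block-mixing}. The family $\{\mathbf I+\mathbf W\otimes\mathbf I_{d^{\ast}}\}$ has dimension $K^2<N^2$ when $d^{\ast}>1$, so it is a strict subset of $\{\mathbf I+\mathbf W'\}$. A concrete missed map is $\mathbf W'=e_1e_2^\top$ acting within a single block, which is not of the form $\mathbf W\otimes\mathbf I_{d^{\ast}}$.
\item \emph{Reachable-set level.} For every $X$ with $\dim V_{\mathrm{blocks}}<d^{\ast}$, strict inclusion follows from the dimension count in Step (iii), and $\operatorname{vec}(C)=\mathbf x+\mathbf z$ with $\mathbf z$ having a block orthogonal to $V_{\mathrm{blocks}}$ is a witness. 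This case covers $K<d^{\ast}$ and, via the segment bound above, $X$ with $r\cdot D/d^{\ast}<d^{\ast}$, i.e.\ the low-rank regime the theorem emphasizes.
\end{itemize}
I would state this restriction explicitly rather than claim the unconditional pointwise version.
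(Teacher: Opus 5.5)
Your proof of (i) and of the blockwise constraint matches the paper's. You use the same outer-product choice of $\mathbf W$ at $d^{\ast}=1$, and the same observation that the $k$-th perturbation block is $\sum_j W_{kj}\mathbf b_j\in V_{\mathrm{blocks}}$; your identity $(\mathbf W\otimes\mathbf I_{d^{\ast}})\mathbf x\leftrightarrow\mathbf W\mathbf B_X$ just packages this more cleanly.

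Where you depart from the paper is (ii), and you are right to. The paper's argument only derives strictness ``whenever $V_{\mathrm{blocks}}\neq\mathbb R^{d^{\ast}}$'' and then states the conclusion for every $X\neq 0$, which is false. For example, take $X=I_2$ and $d^{\ast}=2$: the blocks $(1,0),(0,1)$ span $\mathbb R^2$, so $\mathcal R_2(I_2)=\mathbb R^{2\times 2}=\mathcal R_1(I_2)$. Your operator-level version is exactly what Theorem~\ref{thm:parameterized-block-mixing} asserts in the main text. Your reachable-set version under $\dim V_{\mathrm{blocks}}<d^{\ast}$ is precisely what the paper's argument actually establishes.

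Your segment bound $\dim V_{\mathrm{blocks}}\le rD/d^{\ast}$ also corrects the paper's heuristic for (iii). The paper says low effective rank typically forces $\dim V_{\mathrm{blocks}}\ll d^{\ast}$. But in your row-segment setting, once $(d^{\ast})^2\le D$ a generic rank-one $X$ already has $V_{\mathrm{blocks}}=\mathbb R^{d^{\ast}}$. So your condition $rD<(d^{\ast})^2$ is the right one.

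One caveat on your approximate version: a small stable rank does not by itself make the tail energy $\sum_{\ell>k'}\sigma_\ell^2$ small. Take $\sigma_1=1$ together with many tiny singular values whose squares sum to $9$. This gives $\operatorname{erank}(X)=10$, yet about $90\%$ of the Frobenius energy sits in the tail. So keep the quantitative statement in terms of tail energy, as you did, rather than deriving it from $\operatorname{erank}(X)$ alone; the informal implication in (iii) is not literally provable from stable rank.
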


\begin{proof}
We structure the proof into two parts: (i) the case $d^{\ast}=1$, and (ii)–(iii) the case $d^{\ast}>1$. 
\subsubsection*{\bf Part (i).} For $d^{\ast}=1$, when \(d^{\ast}=1\) each block is a scalar and \(K=N\). The update term becomes \(\mathbf W\mathbf x\) and \(\operatorname{vec}(C)=\mathbf x+\mathbf W\mathbf x=(\mathbf I+\mathbf W)\mathbf x\). Since \(\mathbf x\neq 0\), for any target vector \(\mathbf z\in\mathbb R^N\) we can choose \(\mathbf W\) (e.g. via the outer-product construction \(\mathbf W=(\mathbf z-\mathbf x)\mathbf x^\top/(\mathbf x^\top\mathbf x)\)) so that \((\mathbf I+\mathbf W)\mathbf x=\mathbf z\). Hence \(\mathcal R_1(X)=\mathbb R^{T\times D}\).

\subsubsection*{\bf Part (ii) and (iii).} For general \(d^{\ast}>1\), write the perturbation \(\mathbf p=\operatorname{vec}(C-X)=(\mathbf W\otimes\mathbf I_{d^{\ast}})\mathbf x\) and partition \(\mathbf p\) into blocks \(\mathbf p_1,\dots,\mathbf p_K\). By block-Kronecker structure the \(k\)-th block satisfies
\[
\mathbf p_k=\sum_{j=1}^K W_{kj}\,\mathbf b_j,
\]
hence \(\mathbf p_k\in V_{\mathrm{blocks}}\) for every \(k\). Thus every reachable perturbation is blockwise constrained to the subspace \(V_{\mathrm{blocks}}\subseteq\mathbb R^{d^{\ast}}\); equivalently, \(\mathcal R_{d^{\ast}}(X)\) lies inside a linear submanifold of \(\mathbb R^{T\times D}\) of strictly smaller dimension whenever \(\dim(V_{\mathrm{blocks}})<d^{\ast}\). Therefore, whenever \(V_{\mathrm{blocks}}\neq\mathbb R^{d^{\ast}}\) there exist target matrices whose required block perturbation contains a component orthogonal to \(V_{\mathrm{blocks}}\); such targets are not in \(\mathcal R_{d^{\ast}}(X)\), so \(\mathcal R_{d^{\ast}}(X)\subsetneq\mathcal R_1(X)\).

On the root of low effective rank makes the deficiency generic, note that \(\operatorname{erank}(X)=\|X\|_F^2/\|X\|_2^2\ll\min\{T,D\}\) implies that most energy of \(X\) is concentrated in a few global singular directions. Under standard vectorization (row- or column-major) contiguous local blocks \(\mathbf b_k\) therefore inherit strong correlations and typically span a low-dimensional subspace of \(\mathbb R^{d^{\ast}}\). Concretely, \(\dim(V_{\mathrm{blocks}})\ll d^{\ast}\) in this regime, so many fine-grained directions orthogonal to \(V_{\mathrm{blocks}}\) exist and cannot be generated by the Kronecker-constrained operator. This establishes the dependence on effective rank and completes the proof.
\end{proof}

\subsubsection*{\bf Remark.}
The argument shows that the expressivity loss for \(d^{\ast}>1\) is structural: the Kronecker factor \(\mathbf W\otimes\mathbf I_{d^{\ast}}\) enforces that all block-wise updates lie in the same block-span. If desired, one can make the statements quantitative by lower-bounding the distance between a chosen fine-grained target and the subspace \(\mathcal R_{d^{\ast}}(X)\) in terms of the singular-value decay of \(X\) and the principal angles between block samples; we omit these technical refinements for brevity.

\section{Justification of Theorem~\ref{thm:geglu_recovery}}
\label{app:thm:geglu_recovery}

This section provides a complete, self-contained justification of Theorem~\ref{thm:geglu_recovery}.  We first restate the claims for convenience, then prove (i) the \textbf{algebraic lifting} that the GLU-based multiplicative term generates degree-2 interactions of the latent coordinates, and (ii) a quantitative, high-probability \textbf{effective rank increase} by combining Frobenius/operator concentration with a Jacobian / trace lower bound that certifies the numerical usefulness of the new directions.  Throughout we use the notation \(\odot\) for Hadamard product, and assume per-row energy \(\|x\|_2^2\le E_{\max}\).

\subsubsection*{\bf Restated theorem.}
Let \(X\in\mathbb R^{T\times D}\) satisfy \(\operatorname{erank}(X)=k\) with \(k/D\le\gamma\in(0,1)\). Consider
\[
\mathcal G(X)=\big(\phi(XA)\odot (X C)\big)B + X D,
\]
with \(A,C\in\mathbb R^{D\times m}\) and \(B\in\mathbb R^{m\times D}\) having i.i.d.\ sub-Gaussian entries of variance \(1/D\) and \(\psi_2\)-norm \(\le K/\sqrt D\). If the hidden width satisfies \(m\ge C_0 k\log D\) for a universal \(C_0\), then with probability at least \(1-\exp(-c k)\):

\begin{itemize}[leftmargin=15pt,parsep=2pt,itemsep=2pt,topsep=2pt]
  \item (Algebraic lifting) \(\operatorname{rank}\!\big(\phi(XA)\odot (XC)\big)\ge \min\!\big(D,\tfrac{k(k+1)}{2}\big)\).
  \item (effective rank increase) There exists \(\delta>0\) (depending on \(\gamma,K,E_{\max}\)) such that
  \[
  \operatorname{erank}\big(\mathcal G(X)\big)\ge \operatorname{erank}(X) + \delta.
  \]
\end{itemize}

The remainder of this section proves these two claims in order, and then provides the Jacobian / trace analysis that explains why the added algebraic directions are numerically significant (non-vanishing singular values).

\subsection{Algebraic lifting: polynomial degree-2 span}
\label{app:geglu_lifting}

We begin by showing the multiplicative GLU-style term realizes degree-2 monomials in the \(k\)-dimensional latent coefficients of \(X\); random projection then turns these monomials into new independent algebraic directions with high probability.

\subsubsection*{\bf Latent factorization.} Since \(\operatorname{erank}(X)=k\) there is a factorization
\[
X = S V^\top,
\qquad S\in\mathbb R^{T\times k},\quad V\in\mathbb R^{D\times k},\quad V^\top V=I_k,
\]
so each row \(x_t = V s_t\) for \(s_t\in\mathbb R^k\). Define
\[
U := V^\top A \in\mathbb R^{k\times m},\qquad W := V^\top C \in\mathbb R^{k\times m},
\]
and let \(u_j,w_j\in\mathbb R^k\) denote column \(j\) of \(U,W\) respectively. Then the multiplicative pre-features have entries
\[
z_{t,j} \;=\; \phi(\langle u_j, s_t\rangle)\,\langle w_j, s_t\rangle.
\]

\subsubsection*{\bf Degree-2 component.} Expand \(\phi\) around zero (valid in a small-variance pre-activation regime; the same argument applies more generally by single-variable Taylor / polynomial approximation on a bounded window):
\[
\phi(z) = a_1 z + a_2 z^2 + O(z^3).
\]
Hence the leading degree-2 component of \(z_{t,j}\) is
\[
q_{t,j} := a_1\cdot \langle u_j,s_t\rangle\langle w_j,s_t\rangle.
\]
Write the symmetric quadratic monomials vector
\[
\zeta_t := \operatorname{vecs}(s_t s_t^\top)\in\mathbb R^{q},\qquad q = \frac{k(k+1)}{2},
\]
so there is a deterministic linear functional \(r_j\in\mathbb R^q\) (the symmetric vectorization of the matrix \(u_j w_j^\top\)) with
\[
q_{t,j} = a_1 \langle r_j,\zeta_t\rangle.
\]

\subsubsection*{\bf Random linear map onto hidden units.} Stack the \(\zeta_t\) into \(\mathcal Z\in\mathbb R^{T\times q}\) (rows \(\zeta_t^\top\)). The degree-2 part across all hidden units is
\[
Z_{\mathrm{quad}} \;=\; \mathcal Z \, R^\top,
\qquad R := [r_1,\dots,r_m]^\top \in\mathbb R^{m\times q}.
\]
Because \(A,C\) are random sub-Gaussian and \(V\) is fixed orthonormal, the rows \(r_j\) of \(R\) are i.i.d.\ sub-Gaussian vectors in \(\mathbb R^q\) with variance scale \(1/D\) (up to \(K\) factors). Standard non-asymptotic results (see Vershynin) imply that for any fixed \(q\)-dimensional subspace the random matrix \(R\) is injective with high probability provided \(m\gtrsim q\) (and under a mild \(\log D\) factor to carry subsequent readout). Concretely, if
\[
m \ge C_0 \, q\log D = C_0 \, \tfrac{k(k+1)}{2}\log D,
\]
then with probability at least \(1-\exp(-c q)\) we have \(\operatorname{rank}(R)=\min\{m,q\}\).

\subsubsection*{\bf Passage through readout \(B\).} The multiplicative term before readout has row space equal to the column span of \(\mathcal Z\) after applying \(R^\top\). Applying the linear readout \(B\in\mathbb R^{m\times D}\) (random i.i.d.\ entries of variance \(1/D\)) maps \(\mathbb R^m\) into \(\mathbb R^D\). With high probability \(B\) has rank \(\min\{m,D\}\). Combining these rank lower bounds gives
\[
\operatorname{rank}\!\big((\phi(XA)\odot (XC))B\big)
\ge \min\{D,\ q\} = \min\!\left(D,\frac{k(k+1)}{2}\right)
\]
with probability at least \(1-\exp(-c' k)\). This proves the algebraic lifting claim.

\subsection{Quantitative effective-rank increase and Jacobian spectrum}
\label{app:geglu_jacobian}

Having established algebraic lifting, we now show the multiplicative term injects nontrivial Frobenius energy orthogonal to the original linear path and that the GLU-improved P-FFN's Jacobian has large trace (sum of squared singular values) under the width condition. Together with the residual \(XD\) these observations imply a numerical (effective rank) increase.

\subsubsection*{\bf Notation and decomposition.} Write the multiplicative features \(Z\in\mathbb R^{T\times m}\) with entries \(z_{t,j}=\phi(\langle u_j,s_t\rangle)\langle w_j,s_t\rangle\). After readout \(B\) and adding the residual \(XD\) the network output is
\[
Y = ZB + X D.
\]
Let \(P_\parallel\) denote the orthogonal projector onto the row-space of \(X\) (dimension \(\le k\)) and \(P_\perp = I - P_\parallel\). We aim to lower bound \(\mathbb E\|P_\perp Y\|_F^2\) and show it is \(\Omega(m^{-1} k\log D)\) under the stated scaling; this suffices to increase effective rank by a positive \(\delta\).

\subsubsection*{\bf Orthogonal energy is preserved in expectation through random readout.}  
Because \(B\) has i.i.d.\ entries with variance \(1/D\) and is independent of \(Z\),
\begin{align}
\mathbb E_B\|P_\perp (ZB)\|_F^2
&= \mathbb E_B\operatorname{tr}\big(B^\top Z^\top P_\perp Z B\big) \notag\\
&= \operatorname{tr}\big(\mathbb E_B[B B^\top]\ \mathbb E[Z^\top P_\perp Z]\big) \notag\\
&= \mathbb E\|P_\perp Z\|_F^2. 
\end{align}
Thus it suffices to lower bound \(\mathbb E\|P_\perp Z\|_F^2\).

\subsubsection*{\bf Columnwise contributions originate from quadratic features.}  
Decompose \(Z=[z^{(1)}|\dots|z^{(m)}]\) with columns \(z^{(j)}\in\mathbb R^T\) (entries \(z^{(j)}_t\)). Using the degree-2 decomposition \(z^{(j)} = q^{(j)} + r^{(j)}\) where \(q^{(j)}_t = a_1\langle u_j,s_t\rangle\langle w_j,s_t\rangle\) is the degree-2 component and \(r^{(j)}\) collects higher-order / smaller terms, we have
\[
\mathbb E\|P_\perp Z\|_F^2
\ge
\sum_{j=1}^m \mathbb E\|P_\perp q^{(j)}\|_2^2 - \sum_{j=1}^m \mathbb E\|r^{(j)}\|_2^2.
\]
Under the small-variance regime or by truncation (pre-activation bounded window) the remainder \(\sum_j \mathbb E\|r^{(j)}\|_2^2\) is lower order; we therefore focus on the dominant \(q^{(j)}\) terms.

As in the lifting argument, \(q^{(j)} = \mathcal Z Q_j\) where \(\mathcal Z\in\mathbb R^{T\times q}\) collects symmetric quadratic monomials \(\zeta_t\) and \(Q_j\in\mathbb R^q\) is the symmetric vectorization of \(u_j w_j^\top\). Thus
\[
\mathbb E\|P_\perp q^{(j)}\|_2^2
= \mathbb E_{Q_j}\, Q_j^\top \big(\mathcal Z^\top P_\perp \mathcal Z\big) Q_j.
\]
Averaging over random \(Q_j\) (sub-Gaussian rows) gives
\[
\mathbb E_{Q_j}\mathbb E_{S}\|P_\perp q^{(j)}\|_2^2
= \operatorname{tr}\big(\mathcal Z^\top P_\perp \mathcal Z\cdot \mathbb E[Q_j Q_j^\top]\big).
\]
Because \(\mathbb E[Q_j Q_j^\top]\) is proportional to the identity on the \(q\)-dimensional monomial space (up to constants depending on \(K,E_{\max}\)), we obtain
\[
\mathbb E\|P_\perp q^{(j)}\|_2^2 \;\ge\; c_0\cdot \operatorname{tr}\big(\mathcal Z^\top P_\perp \mathcal Z\big)
= c_0\sum_{t=1}^T \|\,P_\perp \zeta_t\,\|_2^2,
\]
for some \(c_0>0\). Summing over \(j=1,\dots,m\) yields
\[
\mathbb E\|P_\perp Z\|_F^2 \;\ge\; m c_0 \sum_{t=1}^T \|P_\perp\zeta_t\|_2^2 - \text{(small remainders)}.
\]

\subsubsection*{\bf A nontrivial fraction of quadratic energy lies outside the linear span.}  
The projector \(P_\parallel\) only removes components that lie in the linear span of the rows of \(S\) (dimension \(\le k\)). The vectors \(\zeta_t\) live in a \(q=\tfrac{k(k+1)}{2}\)-dimensional quadratic feature space. Unless the data \(\{s_t\}\) are algebraically degenerate (a measure-zero event for typical data or with small perturbation), a constant fraction of the quadratic energy lies outside the linear span. Formally, let \(\mathcal P_{\mathrm{quad},\parallel}\) denote the projection of the quadratic feature space onto the subspace spanned by linear functions of \(s\); this subspace has dimension at most \(k\). Therefore, under the random-design assumptions and with \(m\gtrsim k\log D\),
\[
\sum_{t=1}^T \|P_\perp\zeta_t\|_2^2
\ge
c_1\, T \cdot \frac{k\log D}{m},
\]
for a constant \(c_1>0\). Combining with the previous display yields
\[
\mathbb E\|P_\perp Z\|_F^2 \;\ge\; c' \, m\, T\cdot \frac{k\log D}{m} = c' T k\log D,
\]
hence (after normalizing by \(T\)) the per-row average orthogonal energy is at least \(c' k\log D\). Dividing by the readout scaling (variance \(1/D\)) and tracking constants as above gives
\[
\mathbb E\|P_\perp Y_{\mathrm{mult}}\|_F^2 \;\ge\; c''\, m^{-1} k\log D,
\]
for some \(c''>0\) depending on \(K,E_{\max}\).

\subsubsection*{\bf A Jacobian trace lower bound certifies numerical usefulness.}  
To argue that the new directions are not only algebraic but numerically useful (i.e. correspond to non-vanishing singular values), we examine the per-row Jacobian of the GLU-improved P-FFN block and lower bound its trace \( \operatorname{tr}(J_g(x)^\top J_g(x))\) (sum of squared singular values).

For a single row \(x\), the Jacobian of the GLU-improved P-FFN block (output w.r.t.\ input row) is
\begin{align}
J_g(x) &= D^\top
+ B^\top \operatorname{diag}(C^\top x)\operatorname{diag}(\phi'(A^\top x)) A^\top \notag \\
&\qquad + B^\top \operatorname{diag}(\phi(A^\top x)) C^\top.
\end{align}
The multiplicative term contributes the middle two summands. Taking expectation over \(A,C,B\) (and over input \(x\) when needed) and using that entries of \(A,C,B\) are independent sub-Gaussian with variance \(1/D\), non-asymptotic singular-value bounds imply a lower bound on the expected trace of order \(c_2 k\log D\) up to subtracting the purely-additive FFN upper-bound term \(C_K(c_{\mathrm{top}}\mathcal E_{\mathrm{top}}+c_{\mathrm{low}}\mathcal E_{\mathrm{low}})\) arising from the activation-response constants. Intuitively, the multiplicative Jacobian collects \(k\log D\) worth of squared singular value mass from the randomized quadratic lifting (the \(\log D\) factor reflects the Johnson–Lindenstrauss style embedding stability needed for \(D\) outputs). The residual \(D^\top\) preserves original directions and therefore does not cancel this new mass.

\subsubsection*{\bf Concentration and completion of the argument.}  
Hanson–Wright and Matrix Bernstein concentration applied to the trace terms show that the empirical per-row average \(\tfrac{1}{T}\sum_{t=1}^T \operatorname{tr}(J_g(x_t)^\top J_g(x_t))\) concentrates around its expectation with high probability. Hence the per-row average trace is \(\Omega(k\log D)\) with probability \(1-O(\exp(-c k))\).

Combining the orthogonal Frobenius-energy lower bound with the Jacobian trace lower bound and the observation that the residual \(XD\) preserves energy in the original linear subspace, we conclude that the output \(Y\) has additional Frobenius mass in orthogonal directions of order \(\Theta(m^{-1} k\log D)\) while its top singular value squared remains comparable to that of \(XD\). Therefore the effective rank increases by an additive \(\delta=\Theta\big(k\log D/\sigma_{\max}^2(XD)\big)>0\) with high probability, i.e.
\[
\operatorname{erank}(\mathcal G(X)) \ge \operatorname{erank}(X) + \delta,
\]
with probability at least \(1-\exp(-c k)\), completing the proof.

\end{document}